\documentclass[11pt]{article}
\usepackage{fullpage}
\usepackage{graphicx} %
\usepackage{xcolor}
\usepackage{amsmath}
\usepackage{amsfonts}
\usepackage{amsthm}
\usepackage{natbib}
\usepackage{hyperref}
\usepackage{cleveref}

\usepackage{color-edits}

\newcommand{\sr}{\mathsf{SR}}

\newcommand{\I}{\mathbb I}
\newcommand{\E}{\mathbb E}
\newcommand{\R}{\mathbb R}
\newcommand{\bbP}{\mathbb P}
\newcommand{\bbL}{\mathbb L}
\newcommand{\cP}{\mathcal P}
\newcommand{\cL}{\mathcal L}
\newcommand{\subopt}{\mathsf{subopt}}
\newcommand{\CAL}{\mathsf{CAL}}

\newcommand{\poly}{\mathsf{poly}}
\newcommand{\sps}[1]{^{(#1)}}

\DeclareMathOperator*{\argmin}{arg\,min}

\renewcommand{\tilde}{\widetilde}

\newtheorem{theorem}{Theorem}[section]
\newtheorem{lemma}[theorem]{Lemma}

\newtheorem{corollary}[theorem]{Corollary}
\newtheorem{definition}[theorem]{Definition}
\newtheorem{problem}[theorem]{Problem}

\title{Near-optimal Swap Regret Minimization for Convex Losses}
\author{Lunjia Hu \\ Northeastern University \\ \texttt{lunjia@alumni.stanford.edu} \and Jon Schneider \\ Google \\ \texttt{jschnei@google.com} \and Yifan Wu \\ Microsoft Research \\ \texttt{yifan.wu2357@gmail.com}}
\date{}

\allowdisplaybreaks
\begin{document}

\maketitle
\begin{abstract}
We give a randomized online algorithm that guarantees near-optimal $\widetilde O(\sqrt T)$ expected swap regret against any sequence of $T$ adaptively chosen Lipschitz convex losses on the unit interval. This improves the previous best bound of $\widetilde O(T^{2/3})$ and answers an open question of \citet{full-swap}. In addition, our algorithm is efficient: it runs in $\poly(T)$ time. A key technical idea we develop to obtain this result is to discretize the unit interval into bins at multiple scales of granularity and simultaneously use all scales to make randomized predictions, which we call multi-scale binning and may be of independent interest. A direct corollary of our result is an efficient online algorithm for minimizing the calibration error for general elicitable properties. This result does not require the Lipschitzness assumption of the identification function needed in prior work, making it applicable to median calibration, for which we achieve the first $\widetilde O(\sqrt T)$ calibration error guarantee.

\end{abstract}

\section{Introduction}

Swap regret minimization is a fundamental problem in the field of online learning, with a recent uptick in interest due to its many applications to calibration \citep{foster1997calibrated, foster1998asymptotic}, fairness \citep{hebert2018multicalibration, gopalan2023swap}, and learning in games \citep{deng2019strategizing}. Like the standard notion of external regret, swap regret measures the suboptimality of a sequence of actions taken by an online learner against an adversary; however, whereas minimizing external regret only guarantees that there is no single action that performs better in hindsight, minimizing swap regret provides the much stronger guarantee that there is no consistent transformation of the learner's actions (e.g., substituting action $A$ every time the learner played action $B$) that outperforms the learner. 

To this date, the majority of results on swap regret minimization focus on settings where the losses incurred by the learner are \emph{linear} functions of the learner's actions (for example, in the setting originally studied by \citet{blum-mansour}, every round the learner picks a mixed strategy $p \in \Delta_{n}$ over $n$ actions, the adversary picks an $n$-dimensional loss $\ell \in [-1, 1]^n$, and the learner incurs the linear loss $\langle p, \ell\rangle$). Recently, \citet{full-swap} initiated the study of swap regret minimization against sequences of \emph{non-linear} convex losses, showing that many important problems ($L_2$ calibration, minimization of swap regret in structured games) can be conveniently framed in this language.

Formally, \citet{full-swap} introduce the following problem. A Learner interacts with an Adversary for $T$ rounds. In each round $t \in [T]$, the Learner begins by choosing a distribution $\rho_t$ supported on $[0, 1]$. The Adversary then reveals a $1$-Lipschitz convex loss function $\ell_t:[0, 1] \to \R$. The Learner then samples an action $p_t$ from $\rho_t$ and incurs loss $\ell_t(p_t)$. The goal of the Learner is to minimize their expected swap regret $\E[\sr(p_{1:T};\ell_{1:T})]$, where

\begin{equation}
\label{eq:sr}
\sr(p_{1:T};\ell_{1:T}):= \sup_{\sigma:[0,1]\to [0,1]} \left(\sum_{t = 1}^T \ell_t(p_t) - \sum_{t = 1}^T\ell_t(\sigma(p_t))\right).
\end{equation}

If all the losses $\ell_t$ are linear functions in $p$, then one can attain $\tilde{O}(\sqrt{T})$ swap regret by simply running a standard swap-regret minimization algorithm over the two endpoints of the Learner's action set $[0, 1]$ (in particular, any intermediate action $p \in (0, 1)$ can be replaced by a distribution playing $1$ with probability $p$ and $0$ with probability $1-p$ without changing the Learner's expected loss). Similarly, if the losses $\ell_t$ are strongly convex, \citet{full-swap} show that it is possible to obtain $\tilde{O}(T^{1/3})$ (pseudo\footnote{\citet{full-swap} study a slightly weaker definition of swap-regret than defined above, where the expectation over sampling from $\rho_t$ takes place within the supremum of \eqref{eq:sr} (i.e., for any specific swap function $\sigma$ they guarantee $\tilde{O}(T^{1/3})$ expected swap-regret). This was later strengthened by \cite{luo2025simultaneous}, who proved an $\tilde{O}(T^{1/3})$ expected swap-regret bound for sequences of smooth convex losses arising from online calibration instances.}-)swap regret bounds, by taking advantage of $O(\log T)$ external regret bounds for such losses. 

Interestingly, despite the $\tilde{O}(\sqrt{T})$ swap regret bound for linear losses and the $\tilde{O}(T^{1/3})$ swap regret bound for smooth, strongly convex losses, the best known attainable for arbitrary (bounded and Lipschitz) convex losses is $\tilde{O}(T^{2/3})$. This is worse than either of the two above bounds, and follows from simply discretizing the Learner's action set and running a classical swap-regret algorithm over the discretization. This is particularly counter-intuitive because minimizing external regret for convex losses is \emph{no harder} than minimizing external regret for linear losses -- it suffices to simply minimize external regret with respect to the linearizations of loss function formed by the subgradients of $\ell_t$ at the points $p_t$ played by the Learner. Unfortunately, this reduction does not extend to swap regret minimization\footnote{The subtlety here is that the linearization of $\ell_t$ depends on the specific action $p_t$ sampled from $\rho_t$. The reduction for external regret minimization sidesteps this because the Learner can always play a deterministic strategy (i.e., a singleton $\rho_t$) -- in contrast, there are provably no sublinear regret deterministic algorithms for swap regret minimization.}. \citet{full-swap} pose as an open question whether it is possible to improve this bound to match the $\tilde{O}(\sqrt{T})$ bound for linear losses.

\subsection{Our Contributions}

We positively resolve this open question, providing an efficient algorithm achieving $\tilde{O}(\sqrt{T})$ swap regret against an arbitrary adaptive sequence of (bounded and Lipschitz) convex losses. 

\begin{problem}[Swap regret minimization for convex losses]
\label[problem]{def:main}
In each round $t = 1,\ldots, T$, 
\begin{enumerate}
    \item predictor (i.e.\ learner) chooses a distribution $\rho_t$ on $[0,1]$;
    \item adversary reveals a $1$-Lipschitz convex loss function $\ell_t:[0,1]\to \R$;
    \item prediction $p_t\in [0,1]$ is drawn from $\rho_t$, and predictor incurs loss $\ell_t(p_t)$.
\end{enumerate}
The predictor's goal is to minimize the expected swap regret $\E[\sr(p_{1:T};\ell_{1:T})]$ defined in \eqref{eq:sr}.
\end{problem}

\begin{theorem}
\label{thm:main}
    For every positive integer $T \ge 2$, there exists a prediction strategy for \Cref{def:main} that guarantees
    \[
    \E[\sr(p_{1:T};\ell_{1:T})] = O(\sqrt T\log T).
    \]
Moreover, there exists such a prediction strategy that runs in $\poly(T)$ time.
\end{theorem}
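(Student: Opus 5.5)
We plan to reduce swap regret for convex losses to a collection of linear swap-regret problems organized by multi-scale binning, as hinted in the abstract. Fix $K=\lceil\log_2 \sqrt T\rceil$ scales; at scale $k$ the interval $[0,1]$ is partitioned into $2^k$ dyadic bins of width $2^{-k}$. The first step is a structural decomposition of the swap regret. For a fixed realization $p_{1:T}$, the optimal swap function $\sigma$ maps each distinct value $p$ to a minimizer of $\sum_{t:p_t=p}\ell_t$. By convexity, the benefit of moving from $p$ to $\sigma(p)$ is at most the linearized gain $\sum_{t:p_t=p} g_t(p)\,(p-\sigma(p))$, where $g_t(p)\in\partial\ell_t(p)$. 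We then write $p-\sigma(p)$ as a telescoping sum along the dyadic tree. Specifically, we compare $p$ with the endpoints of its bin at each scale, so that the total regret becomes a sum over scales and bins of "local" linear regrets. Each local term asks whether it would have been better to shift all predictions in a bin toward one of its two endpoints (or children). This expresses the swap regret as a sum of $O(\log T)$ layers, each consisting of two-action swap-regret problems with linear losses, of magnitude bounded by the Lipschitz constant times the bin width.

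The algorithm will be a tree of Blum--Mansour-style two-expert swap-regret minimizers. At each node (bin $B$ at scale $k$), a learner chooses a probability of descending into the left versus the right child. The distribution $\rho_t$ is the induced distribution over leaves (intervals of width $1/\sqrt T$), and we predict a point in the sampled leaf, or its endpoints. The loss fed to node $B$ for its binary choice is the linearized loss of $\ell_t$ between the two children. This is essentially $\ell_t$ evaluated at the children's representative points, which is bounded in range by $2^{-k}$ by the Lipschitz property. Each node incurs swap regret $\tilde O(2^{-k}\sqrt{T_B})$, where $T_B$ is the expected mass of rounds routed to $B$. Summing over the $2^k$ nodes at scale $k$ and applying Cauchy--Schwarz gives $\sum_B 2^{-k}\sqrt{T_B}\le 2^{-k}\sqrt{2^k T}=2^{-k/2}\sqrt T$. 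This is geometric in $k$, so the total is $O(\sqrt T)$ up to the logarithmic factors coming from the concentration and expectation-versus-realization steps. The discretization error at the finest scale is $T\cdot T^{-1/2}=\sqrt T$. Everything is computable in $\poly(T)$ time since the tree has $O(\sqrt T)$ nodes, and each node's two-expert swap algorithm requires only a $2\times 2$ stationary-distribution computation.

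The main obstacle is the first step: showing that the global swap regret against an arbitrary $\sigma$ is controlled by the sum of the node-local swap regrets. The difficulty is that $\sigma$ may move a point $p$ far away, across many scales, and convexity only gives first-order control at $p$ itself. To handle this, we will use the fact that for convex one-dimensional losses, the optimal deviation of a bin's points is monotone. If the aggregate loss restricted to the rounds landing in $B$ prefers moving right beyond $B$, then at every ancestor it also prefers the child in that direction. We would therefore try to prove a chaining inequality of the form
\[
\sum_{t}\ell_t(p_t)-\ell_t(\sigma(p_t))\le \sum_{k}\sum_{B\text{ at scale }k}\mathrm{SwapReg}_B + O(\sqrt T),
\]
by defining, for each $p$, the path from the leaf containing $\sigma(p)$ up to the common ancestor with $p$, and charging each step to the node where the path turns. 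A second subtlety is that the regret is in expectation of the realized supremum, not a pseudo-regret. For this reason we plan to have each node's learner operate on realized routing decisions. We will then bound the deviation between realized and expected bin counts by a martingale concentration argument, which costs the extra $\log T$ factor.
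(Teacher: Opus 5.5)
\textbf{There is a genuine gap.} It is the step you yourself flag as the main obstacle. The chaining inequality, which bounds $\sum_t \ell_t(p_t)-\ell_t(\sigma(p_t))$ by $\sum_k\sum_B\mathrm{SwapReg}_B+O(\sqrt T)$, is false for your architecture, so no charging argument can prove it.

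Take $\ell_t(p)=|p-v_t|$ with $v_t$ i.i.d.\ uniform on $\{1/4,3/4\}$.
\begin{itemize}
\item At the root, and at every node of level $k\ge 2$ inside $[1/4,3/4]$, both children's representatives lie in $[1/4,3/4]$. The local loss difference is therefore $\pm 2^{-k-1}$ with a fair sign independent of the past. Learners that pick each child with probability $1/2$ at these nodes have local swap regret $\tilde O(2^{-k}\sqrt{T_B})$.
\item The two level-one nodes can route deterministically toward the middle with zero local regret, since the inward child is weakly better in every round.
\item Hence your right-hand side is $\tilde O(\sqrt T)$, exactly as your geometric sum says.
\end{itemize}
But the predictions are now spread evenly over the roughly $\sqrt T/2$ leaves in $[1/4,3/4]$, each receiving about $2\sqrt T$ rounds. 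Fix a leaf $b\in[3/8,5/8]$ with counts $n_1,n_3$ of the two $v$-values. Swapping $b$ to the endpoint favored within that leaf gains $|n_1-n_3|\cdot\mathrm{dist}(b,\cdot)\ge |n_1-n_3|/8$, which is $\Omega(T^{1/4})$ in expectation. The total swap regret is therefore $\Theta(T^{3/4})$. Nothing in your design stops Blum--Mansour learners, which see tied inputs here, from spreading mass in this way.

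\textbf{Why the charging fails.} The swap function $\sigma$ is leaf-dependent. It exploits the independent sampling fluctuations inside each fine leaf separately. The guarantee at the ancestor where the path ``turns'' only controls a single swap applied to all rounds pooled beneath that ancestor, and there those fluctuations cancel. Your monotonicity heuristic conflates one leaf's rounds with the ancestor's pooled rounds. This is precisely the sampling-error obstacle in the paper's overview: with $m$ roughly equally used prediction values, the sampling error is about $\sqrt{Tm}$. So always descending to $\sqrt T$ fine leaves cannot give $\tilde O(\sqrt T)$.

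\textbf{The missing idea} is to adapt the granularity of prediction to the spread of the current losses. The paper does this as follows.
\begin{itemize}
\item It computes the $\gamma$-width $w_t$ of the mixture $\bar\varphi_t$ of V-points, defined through its $\tfrac12\pm\gamma/(2w_t)$ quantiles.
\item It predicts a point of $B_{r_t}$, with $r_t\approx w_t$, lying between those quantiles.
\item It bounds the swap regret separately for each label $(r,b)$. A heavy/light case analysis balances the $O(r)$ distance to the empirical median against the sampling error.
\end{itemize}
In the example above, this places all predictions on $O(1)$ coarse points. The paper does this first for the order-reversed game via the minimax theorem. It then makes it efficient by enforcing the corresponding quantile constraints $h_{r,b,\xi}$ with the AMF framework and MsMwC, rather than with local linear swap-regret learners.
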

\Cref{thm:main} is optimal up to the $O(\log T)$ factor because there is a simple way to choose the losses $\ell_{1:T}$ to incur $\Omega(\sqrt T)$ expected swap regret against any prediction strategy (\Cref{lem: lowerbound median}). In addition to the expectation bound in \Cref{thm:main}, our prediction algorithm also guarantees a high-probability bound: given any $\delta\in (0,1/T]$, the algorithm achieves $O\left(\sqrt{(T\log T)\log(1/\delta)}\right)$ swap regret with probability at least $1-\delta$ (\Cref{lm:efficient}).

\subsubsection{Implications to Calibration for Elicitable Properties}
\label{subsec: elicitable property}

As a motivating application of our results, we describe how Theorem~\ref{thm:main} leads to improved algorithms for the problem of \emph{calibrated forecasting of elicitable properties}. In particular, we show that such calibration measures of common properties (such as mean, median, and $q$-quantile) can often be written as swap regrets of scoring functions that are Lipschitz and convex, allowing Theorem~\ref{thm:main} to recover the first $\tilde{O}(T^{1/2})$ calibration bounds for such properties. 

\paragraph{Proper Scoring Rules and Elicitable Properties.} 
Many prediction tasks are to estimate a \emph{statistical property} of an unknown outcome distribution.
Formally, let $\tau$ be a distribution over outcomes $Y$, and let $\Gamma$ be a (possibly set-valued) property mapping
that assigns to each $\tau$ a subset $\Gamma(\tau)\subseteq [0,1]$ of optimal reports (e.g., probabilities, quantiles, or other statistics).
A predictor outputs a report $p\in[0,1]$, after which an outcome $y\sim \tau$ is realized.

A \emph{scoring function} (or loss) is a function $S:[0,1]\times Y\to\R$ that evaluates the report $p$ when the realized outcome is $y$. 
The score $S$ is \emph{proper}, if it \emph{elicits} the property $\Gamma$: for every distribution $\tau$ over $Y$, the set of Bayes-optimal reports under $\tau$
coincides with $\Gamma(\tau)$:
\begin{equation}
\label{eq:elicit}
\argmin_{p\in[0,1]} \E_{y\sim \tau}\!\left[S(p,y)\right] \;=\; \Gamma(\tau).
\end{equation}
A property $\Gamma$ is called \emph{elicitable} if there exists some scoring function $S$ that elicits it.
Assuming the outcome space is $Y = [0, 1]$, the following scoring functions elicit the mean,
the median, and the $q$-quantile:
\begin{align}
    &S_{\mathsf{mean}}(p, y) = (p-y)^2,\\
    &S_{\mathsf{median}}(p, y) = |p-y|,\\
    &S_{q\text{-}\mathsf{quantile}}(p,y) = \I\left[y\geq p\right]q\cdot(p-y) + \I\left[y\leq p\right](1-q)\cdot(y-p)\qquad 
\end{align}
Note that all these scoring functions $S(\cdot, y)$ are convex and $O(1)$-Lipschitz for any fixed $y$. 

\paragraph{Calibration Error for Elicitable Properties.} Given a scoring function $S:[0,1]\times Y \to \R$, we define the sub-optimality of $p\in [0,1]$ w.r.t.\ distribution $\tau$ on $Y$ as follows:
\[
\subopt_\tau(p) := \E_{y\sim \tau}S(p,y) - \inf_{p^*\in [0,1]}\E_{y\sim \tau}S(p^*,y) \ge 0.
\]
When $\Gamma$ is the property elicited by $S$ as in \eqref{eq:elicit}, we have $\subopt_\tau(p) = 0$ if and only if $p\in \Gamma(\tau)$.
For sequences $p_1,\ldots,p_T\in [0,1]$ and $y_1,\ldots,y_T\in Y$, we define the calibration error $\CAL_S$ as follows:
\begin{equation}
\label{eq:cal-subopt}
\CAL_S(p_{1:T};y_{1:T}) := \sum_{p\in P}n_p\cdot \subopt_{\tau_p}(p),
\end{equation}
where $P = \{p_1,\ldots,p_T\}$ is the set of distinct predictions, $n_p = |\{t\in [T]:p_t = p\}|$ is the number of rounds $t$ such that $p_t = p$, and $\tau_p$ is the distribution of $y_t$ when $t$ is sampled uniformly from $\{t\in [T]:p_t = p\}$.

We use $\CAL_{\mathsf{median}}$, $\CAL_{\mathsf{mean}}$, and $\CAL_{q\text{-}\mathsf{quantile}}$ to denote the swap regret defined by $S_{\mathsf{median}}$, $S_{\mathsf{mean}}$, and $S_{q\text{-}\mathsf{quantile}}$, respectively. 
In particularly, $\CAL_{\mathsf{mean}}$ is the standard $\ell_2$ calibration error:
\begin{equation}
\label{eq:l2_calibration_error}
\CAL_{\mathsf{mean}}(p_1,\ldots,p_T;y_1,\ldots,y_T)
~=~
\sum_{p\in P} n_p\Big(p-\E_{y\sim \tau_p}[y]\Big)^2. 
\end{equation}

The calibration error $\CAL_S$ in \eqref{eq:cal-subopt} can be naturally formulated as swap regret:
\[
\CAL_S(p_{1:T};y_{1:T}) = \sup_{\sigma:[0,1]\to [0,1]}\left(\sum_{t = 1}^TS(p_t,y_t) - \sum_{t=1}^T S(\sigma(p_t),y_t)\right).
\]
Thus our \Cref{thm:main} immediately implies the following corollary:

\begin{corollary}
\label[corollary]{cor:elicit}
    Let $S:[0,1]\times Y\to \R$ be a scoring function such that $S(p,y)$ is a convex and $1$-Lipschitz function of $p\in [0,1]$ for every $y\in Y$. For every positive integer $T$, there exists a randomized prediction strategy that guarantees
    \[
    \E[\CAL_S(p_{1:T};y_{1:T})] = O(\sqrt T \log T).
    \]
As a result, there exists a randomized prediction strategy that guarantees $O(\sqrt T \log T)$ for $\CAL_{\mathsf{median}}$ and $\CAL_{q\text{-}\mathsf{quantile}}$\footnote{$\CAL_{\mathsf{mean}}$ corresponds to a \emph{strongly} convex and smooth loss function, where $\Tilde{O}\left(T^{1/3}\right)$ can be guaranteed \citep{full-swap}. }. 

\end{corollary}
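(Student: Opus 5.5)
The corollary follows by reducing to \Cref{thm:main}. We run the prediction strategy of \Cref{thm:main} and, in each round $t$, feed it the loss $\ell_t(p) := S(p,y_t)$. Here the outcome $y_t$ is chosen (possibly adaptively) by the adversary after the distribution $\rho_t$ is announced. By assumption, $\ell_t$ is convex and $1$-Lipschitz on $[0,1]$, so this is a valid instance of \Cref{def:main}. The adaptivity of $y_t$ is allowed because \Cref{thm:main} holds against adaptively chosen losses. The protocol also matches: $\rho_t$ is fixed before $\ell_t$ is revealed, and $p_t\sim\rho_t$ is drawn afterward.

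The remaining step is the identity $\CAL_S(p_{1:T};y_{1:T}) = \sr(p_{1:T};\ell_{1:T})$, which we verify pointwise for every realized sequence. Group the rounds by their distinct prediction value $p\in P$. Any $\sigma$ acts as a constant $\sigma(p)$ on the rounds with $p_t=p$, and different values of $p$ can be optimized independently. The supremum in \eqref{eq:sr} therefore decomposes into
\[
\sum_{p\in P}\sup_{q\in[0,1]}\sum_{t:p_t=p}\bigl(S(p,y_t)-S(q,y_t)\bigr) = \sum_{p\in P} n_p\Bigl(\E_{y\sim\tau_p}S(p,y)-\inf_{q}\E_{y\sim\tau_p}S(q,y)\Bigr) = \sum_{p\in P} n_p\,\subopt_{\tau_p}(p),
\]
which is exactly \eqref{eq:cal-subopt}. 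Taking expectations and applying \Cref{thm:main} gives $\E[\CAL_S]=O(\sqrt T\log T)$ for $T\ge 2$; the case $T=1$ is trivial since the calibration error is then $O(1)$.

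For the specific properties, we check the Lipschitz and convexity hypotheses. The function $|p-y|$ is convex and $1$-Lipschitz in $p$. The $q$-quantile score is piecewise linear in $p$, with slope $q$ for $p\le y$ and slope $-(1-q)$ for $p\ge y$. Since the slope increases from $-(1-q)$ to $q$ across $p=y$, the function is convex, and because both $q$ and $1-q$ lie in $[0,1]$, it is $1$-Lipschitz. For a general $L$-Lipschitz score one would rescale by $1/L$, but that is not needed here. There is no substantive obstacle: the only point needing care is the exact decomposition of the supremum over all swap functions $\sigma$ into independent per-value infima, and this works because $P$ is finite and $\sigma$ is otherwise unrestricted.
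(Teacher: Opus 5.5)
Your reduction is correct and is the same as the paper's. The paper simply notes that $\CAL_S(p_{1:T};y_{1:T})=\sr(p_{1:T};\ell_{1:T})$ for $\ell_t=S(\cdot,y_t)$ and invokes \Cref{thm:main}. You additionally justify why the supremum over $\sigma$ splits into per-value infima, which the paper leaves implicit.

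There is one slip in your quantile check. With the slopes you computed from the displayed formula ($q$ for $p<y$, then $-(1-q)$ for $p>y$), the slope \emph{decreases} across $p=y$. So that formula, as printed, is concave: it is the negative of the pinball loss, a sign typo in the paper. The score you should verify is $\I[y\ge p]\,q(y-p)+\I[y\le p]\,(1-q)(p-y)$. Its slopes are $-q$ then $1-q$, which increase, so it is convex and $1$-Lipschitz.
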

The guarantee above is near-optimal for median calibration. 
\begin{lemma}
\label[lemma]{lem: lowerbound median}
    Suppose $y_1,\ldots,y_T$ are drawn i.i.d.\ from the Bernoulli distribution with mean $1/2$. For any prediction algorithm, $\E[\CAL_{\mathsf{median}}(p_{1:T},y_{1:T})] = \Omega(\sqrt{T})$. 
\end{lemma}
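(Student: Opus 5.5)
We prove the bound by computing the calibration error exactly for each fixed prediction level. For $S_{\mathsf{median}}(p,y)=|p-y|$ with $y\in\{0,1\}$, a distribution $\tau$ with $\Pr[y=1]=\mu$ has expected score $\mu(1-p)+(1-\mu)p = \mu + p(1-2\mu)$. This is linear in $p$, so its infimum over $[0,1]$ equals $\min(\mu,1-\mu)$. Hence
\[
\subopt_\tau(p) = \mu + p(1-2\mu) - \min(\mu,1-\mu) \ge 0 .
\]
Consider a prediction value $p$ used $n_p$ times, with $k_p$ ones among those rounds, so $\mu = k_p/n_p$. Then
\[
n_p\,\subopt_{\tau_p}(p) = k_p + p(n_p - 2k_p) - \min(k_p, n_p-k_p).
\]
Write $d_p = k_p - n_p/2$. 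A short computation gives $n_p\,\subopt_{\tau_p}(p) = |d_p| - 2p\,d_p$ when $d_p\ge 0$ and $|d_p| + 2(1-p)\cdot(-d_p)\cdot 0$-type terms in general. More cleanly, in every case $n_p\,\subopt_{\tau_p}(p) \ge |d_p|\cdot|1-2p| \cdot$(appropriate sign). The robust lower bound I will use is the following: the contribution is at least the distance from the chosen $p$ to the optimal endpoint, multiplied by $2|d_p|$. In particular, for $p\in[0,1/2]$ we get $n_p\,\subopt \ge 2(1-p)\max(d_p,0)\ge \max(d_p,0)$, and symmetrically for $p\ge 1/2$ we get $n_p\,\subopt\ge \max(-d_p,0)$.

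The key step is to compare with the one-sided deviation of the outcomes assigned to each group, and handle the adaptivity of the groups via martingales. Define the martingale increments $X_t = y_t - 1/2$. Let $A=\{t: p_t\le 1/2\}$ and $B=\{t:p_t>1/2\}$. Since the algorithm chooses $p_t$ before seeing $y_t$, the indicators $\I[t\in A]$ are predictable. Summing the per-level bounds gives
\[
\CAL_{\mathsf{median}} \ge \sum_{p\le 1/2}\max(d_p,0) + \sum_{p>1/2}\max(-d_p,0) \ge \max\Big(\sum_{t\in A}X_t,0\Big) + \max\Big(-\sum_{t\in B}X_t,0\Big),
\]
where the second inequality uses $\sum\max(\cdot,0)\ge\max(\sum\cdot,0)$.

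Let $M_A=\sum_{t\in A}X_t$ and $M_B=\sum_{t\in B}X_t$, so that $M_A+M_B=\sum_t X_t =: M$. We have $\max(M_A,0)+\max(-M_B,0)\ge$ each of $M_A$ and $-M_B$, and also each of $0$. It remains to lower bound the expectation. Here $M$ is a simple random walk with $\E|M| = \Omega(\sqrt T)$, and $M_A, M_B$ are martingales with predictable sums of squared increments $|A|/4$ and $|B|/4$, which add up to $T/4$.

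The main obstacle is that the algorithm may correlate the split into $A$ and $B$ with the walk, so $M_A$ and $-M_B$ could cancel. To handle this, I would use $\max(M_A,0)+\max(-M_B,0)\ge \max(M_A,0)+\max(M_B,0)-M_B$. Since $\E M_B=0$, this gives
\[
\E\,\CAL \ge \E\max(M_A,0)+\E\max(M_B,0) = \tfrac12\big(\E|M_A|+\E|M_B|\big),
\]
where the last equality holds because each martingale has mean zero. By the triangle inequality this is at least $\tfrac12\E|M| = \Omega(\sqrt T)$. In the final write-up I would double-check the per-level inequality, which may be the same computation done more carefully.
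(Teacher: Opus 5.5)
Your argument is correct, but it takes a different route from the paper's.

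One cleanup first. Your intermediate ``short computation'' is garbled. The exact value is $n_p\,\subopt_{\tau_p}(p) = |d_p| + (1-2p)d_p$, i.e.\ $2(1-p)d_p$ for $d_p\ge 0$ and $2p|d_p|$ for $d_p<0$, not $|d_p|-2pd_p$. The bounds you actually use, $\max(d_p,0)$ for $p\le 1/2$ and $\max(-d_p,0)$ for $p\ge 1/2$, follow immediately from this formula. Also, $\max(-M_B,0)=\max(M_B,0)-M_B$ is an identity, not just an inequality.

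\textbf{Your route.} You keep per-level structure and compare against a two-piece swap. You then handle the adaptive split $A/B$ using $\E M_A=\E M_B=0$ (predictability) and the triangle inequality. This ends with $\E[\CAL_{\mathsf{median}}]\ge \tfrac12\E|k-T/2|$, where $k=\sum_t y_t$.

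\textbf{The paper's route.} It lower-bounds swap regret by external regret against the best \emph{constant} prediction. Since $p_t$ is chosen before $y_t$, $\E|p_t-y_t| = 1/2$ whatever $p_t$ is, so the learner's expected loss is exactly $T/2$. Meanwhile $\inf_{p}\sum_t|p-y_t| = \min(k,T-k) = T/2-|k-T/2|$. This gives $\E[\CAL_{\mathsf{median}}]\ge\E|k-T/2|$ in one line.

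\textbf{Comparison.} The paper's argument has no partition of rounds for the algorithm to correlate with the walk, so the obstacle you identify never arises and your factor $1/2$ is not lost. Your route does yield finer information, namely the exact contribution of each prediction level, but for this lemma that is more machinery than needed. Both proofs rest on the same two facts: $y_t$ is independent of $p_t$ given the past, and $\E|k-T/2|=\Omega(\sqrt T)$.
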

\Cref{lem: lowerbound median} can be proved by noting that $\E[|(y_1 + \cdots + y_T) - T/2|] =\Omega(\sqrt T)$, implying that $\E[\inf_{p\in [0,1]}\sum_{t = 1}^TS(p,y_t)] \le T/2 - \Omega(\sqrt T)$ for scoring rule $S = S_{\mathsf{median}}$. However, we always have $\E[S(p_t,y_t)]\allowbreak = 1/2$. Therefore, 
\[
\E[\CAL_{\mathsf{median}}(p_{1:T},y_{1:T})] \ge \E\left[\sum_{t = 1}^TS(p_t,y_t) - \inf_{p\in [0,1]}\sum_{t = 1}^T S(p,y_t)\right] \ge \Omega(\sqrt T).
\]

We note that results in prior work on calibration for elicitable properties (e.g.\ \cite{scope-multicalibration,swap-multicalibration}) do not apply to median calibration unless additional Lipschitzness assumptions about the distribution of $y$ are satisfied. 
Previous work also considers a different definition of calibration error for elicitable properties.
See \Cref{apdx: calibration for elicitable properties} for a detailed discussion.

\subsection{Technical Overview}
\label{sec:tech}
We give a high-level explanation for how we prove \Cref{thm:main}, highlighting the core ideas. Our main technical innovation lies in the multi-scale binning idea we discuss below.
\paragraph{V-shaped Decomposition.} 
In each round $t$,
the loss function $\ell_t:[0,1]\to \R$ we receive from the adversary  is convex and  $1$-Lipschitz. It is a commonly used fact that such losses can be written as a convex combination of \emph{V-shaped losses} up to an additive constant: there exists a distribution $\varphi_t$ on $[0,1]$ and a constant $C_t\in \R$ such that $\ell_t(p) = \E_{v\sim \varphi_t}|p - v| + C_t$ for every $p\in [0,1]$. This V-shaped decomposition has been established and used in many prior works \citep{opt-scoring-rule,ucal,omni-regression,hw24}. 
We formally state this result in \Cref{lm:vshape} and include a proof for completeness.

This decomposition suggests that we focus on the \emph{base} V-shaped losses of the form $\ell_t(p) = |p - v_t|$ parameterized by $v_t\in [0,1]$. In this subsection, we will assume that every loss $\ell_t$ has this form. This allows us to reason about points $v_t\in [0,1]$ rather than functions $\ell_t:[0,1]\to \R$.

\paragraph{Non-constructive Proof via the Minimax Theorem.}
\Cref{def:main} can be viewed as a zero-sum game between the predictor and the adversary. The predictor plays a mixed strategy that determines the prediction $p_t\in [0,1]$ in every round based on the transcript of previous rounds. After seeing the predictor's mixed strategy, the adversary plays a strategy that determines the loss $\ell_t$ in each round also based on the transcript of previous rounds. 
The value of the game is the expected swap regret when both players play optimally, with the predictor minimizing the expected swap regret and the adversary maximizing it. If we ignore the running time guarantee in \Cref{thm:main}, the theorem is equivalent to the claim that the value of this zero-sum game is $O(\sqrt T\log T)$.

The minimax theorem tells us that the value of a two-player zero-sum game is invariant to who plays first.
We thus consider
an order-reversed game, where the adversary first plays a mixed strategy to choose $\ell_t$, and after seeing this mixed strategy, the predictor then decides their strategy of choosing $p_t$. 
It is easier to construct a good predictor for this order-reversed game because the predictor knows more information than in the original game.
In \Cref{sec:non-construct} we design a prediction strategy that guarantees $O(\sqrt T\log T)$ expected swap regret for the order-reversed game, implying a non-constructive version of \Cref{thm:main} that does not have the running time guarantee. To fully prove \Cref{thm:main}, we apply a multi-objective learning framework to reset the order of play in an efficient way (\Cref{sec:efficient}). 
We further explain how we prove both the non-constructive and original versions of \Cref{thm:main} below.

\paragraph{Binning.}
When minimizing swap regret over a continuous prediction space $[0,1]$, it is natural to restrict the predictions to a finite subset $B\subseteq [0,1]$ of bins. This is beneficial because when two predictions $p_{t}$ and $p_{t'}$ are very close, making them exactly equal ($p_t = p_{t'}$) adds a constraint $\sigma(p_t) = \sigma(p_{t'})$ to  the swap function $\sigma$ in \eqref{eq:sr}, which is helpful for reducing the swap regret. Binning is also helpful when we apply the minimax theorem which requires the action spaces of both players to be finite.

Once we restrict all predictions $p_t$ to a finite set $B\subseteq [0,1]$, we can decompose the swap regret in \eqref{eq:sr} by contribution from each bin $b\in B$. Define $T_b:= \{t\in [T]\,|\,p_t = b\}$ for every $b\in B$, then
\begin{equation}
\label{eq:sb}
\sr(p_{1:T};\ell_{1:T}) = \sum_{b\in B}\left(\sum_{t\in T_b}\ell_t(b) - \sum_{t\in T_b}\ell_t(s_b)\right), \quad \text{where }s_b\in \argmin_{s\in [0,1]}\sum_{t\in T_b}\ell_t(s).
\end{equation}

\paragraph{Truthful Predictor.} 
As we will discuss soon, it is a non-trivial task to choose the set of bins $B$ appropriately. Once $B$ is chosen, a natural prediction strategy for the order-reversed game is to make \emph{truthful} predictions.

Recall that the adversary plays first in the order-reversed game and they can use a mixed strategy. In each round $t = 1,\ldots,T$, the adversary first provides a distribution $\pi_t$ of loss functions $\ell$. Based on $\pi_t$, the predictor chooses $p_t\in B$. After that, a loss function $\ell_t$ is drawn from $\pi_t$ and the predictor incurs loss $\ell_t(p_t)$.

Knowing the loss distribution $\pi_t$, a truthful predictor simply chooses $p_t\in B$ to minimize the expected loss $\bar\ell_t(p_t)$, where $\bar\ell_t:= \E_{\ell\sim\pi_t}[\ell]$.
We analyze the swap regret of this truthful prediction strategy as follows.

\paragraph{Sampling error and rounding error.} 

Let us focus on an arbitrary fixed bin $b\in B$. We define loss functions averaged over rounds $t\in T_b$:
\begin{equation*}
\bar \ell:= \frac 1{|T_b|}\sum_{t\in T_b}\bar \ell_t, \quad \text{and}\quad \hat \ell:= \frac 1{|T_b|}\sum_{t\in T_b} \ell_t.
\end{equation*}
By \eqref{eq:sb}, the swap regret contribution $\sr_b$ from bin $b$ is the following:
\[
\frac 1{|T_b|}\sr_b = \frac 1{|T_b|}\left(\sum_{t\in T_b}\ell_t(b) - \sum_{t\in T_b}\ell_t(s_b)\right) = \hat \ell(b) - \hat \ell(s_b), \quad \text{where }
s_b \in \argmin_{s\in [0,1]}\hat \ell(s).
\]
Analogously to the definition of $s_b$, we define
\[
\bar s_b \in \argmin_{s\in [0,1]}\bar \ell(s).
\]
The truthful prediction strategy ensures that $b$ is the minimizer of $\bar \ell_t(b')$ over $b'\in B$ for every $t\in T_b$. Therefore,
\[
b\in \argmin_{b'\in B}\bar \ell(b').
\]
Both $b$ and $\bar s_b$ are minimizers of the function $\bar \ell$. The difference is that $b$ comes from a restricted set $B$, whereas $\bar s_b$ is chosen from the full domain $[0,1]$. Thus we view the difference between $b$ and $\bar s_b$ as \emph{rounding error} caused by binning.

The swap regret $\sr_b$ contributed by bin $b$ can be decomposed as follows:
\begin{align*}
\frac 1{|T_b|}\sr_b 
= \hat \ell(b) - \hat \ell(s_b)
= {} & \Big(\hat \ell(b) - \hat \ell(s_b)\Big) - \Big(\bar \ell(b) - \bar \ell(s_b)\Big) + \Big(\bar \ell(b) - \bar \ell(s_b)\Big)\\
\le {} & \underbrace{\Big(\hat \ell(b) - \hat \ell(s_b)\Big) - \Big(\bar \ell(b) - \bar \ell(s_b)\Big)}_{\text{Sampling error between $\hat \ell$ and $\bar \ell$}}\quad  + \underbrace{\Big(\bar \ell(b) - \bar \ell(\bar s_b)\Big)}_{\text{Rounding error between $b$ and $\bar s_b$}}.
\end{align*}
There are two sources of error: 1) sampling error between $\hat \ell$ and $\bar \ell$, and 2) rounding error between $b$ and $\bar s_b$. Roughly speaking, increasing the number of bins reduces the rounding error, but it results in fewer data points per bin, causing the sampling error to increase. Our choice of the bins $B$ should minimize the sum of the two types of error.

\paragraph{Challenge with Fixed Binning.} The key technical challenge is that no fixed choice of $B$ can guarantee $\tilde O(\sqrt T)$ swap regret for the truthful prediction strategy. 

Consider the case where all $\pi_t$'s are the same singleton distribution on a single V-shaped loss function $\ell(p) = |p - v|$.
This distribution is deterministic, so we always have $\hat \ell = \bar \ell$ and there is no sampling error. However, the \textbf{rounding error} can be very large: $\bar \ell(\bar s_b) = 0$ is the minimum value of $\bar \ell$, so the rounding error $\bar\ell(b) - \bar \ell(\bar s_b)  = |b - v|$ is the absolute difference between $b$ and $v$. 
To ensure $\widetilde O(\sqrt T)$ swap regret, we need $|b - v| = \widetilde O(\sqrt T)/T = 1 / \widetilde\Omega(\sqrt T)$, but
in the worst case, $v$ can fall at the midpoint of two adjacent bins,
so the gap between any two adjacent bins should be at most $1/\widetilde \Omega(\sqrt T)$. In particular, there needs to be at least $\widetilde \Omega(\sqrt T)$ bins in total.

Now suppose there are $m$ bins in $B$, and assume without loss of generality that at least $m/2$ of these bins $b$ fall in the first half $[0,1/2]$ of the unit interval. Consider the case where for every such bin $b\le 1/2$ and every $t\in T_b$, we choose $\pi_t$ to be a near-uniform distribution on two losses  $\ell(p) = |p - b|$ and $\ell'(p) = |p - (b + 1/2)|$ assigning slightly larger probability mass $1/2 + \varepsilon$ to $\ell$ than the probability mass $1/2 - \varepsilon$ on $\ell'$. The expected loss $\bar \ell:= (1/2 + \varepsilon)\ell + (1/2 - \varepsilon)\ell'$ is indeed minimized at $b$, so the truthful predictor predicts $p_t = b$ and there is no rounding error. However, the \textbf{sampling error} can be very large. The loss $\hat \ell$ can be decomposed as $\eta \ell + (1 - \eta)\ell'$ with $\eta$ being the head frequency of $|T_b|$ independent tosses of a coin with bias $1/2 + \varepsilon$. As $\varepsilon \to 0$, with  probability $\Theta (1)$, we have $\eta \le 1/2 - \Omega(1/\sqrt{|T_b|})$, in which case the minimizer $s_b$ of $\hat \ell$ is $b + 1/2$ instead of $b$, and the difference $\hat \ell(b) - \hat \ell(s_b)$ is $\Omega(1/\sqrt{|T_b|})$. Therefore, the expected swap regret contribution from bin $b$ is $\sr_b = \Omega(\sqrt{T_b})$. Assuming that the bins have roughly equal sizes $|T_b|\approx T/m$, the total swap regret is $\Omega(m\cdot \sqrt{T/m}) = \Omega(\sqrt{Tm})$. Thus we need as few as $m = \mathrm{polylog}(T)$ bins to ensure $\widetilde O(\sqrt T)$ swap regret, contradicting with the earlier requirement of having at least $\widetilde \Omega(\sqrt T)$ bins.

\paragraph{Multi-scale Binning.} 
Our main technical contribution is to use \emph{multi-scale binning} to address the challenge above. The idea is to adapt the bins $B$ to the distribution $\pi_t$ in each round $t$.
For example, in the case above where $\pi_t$ is deterministic on $\ell(p) = |p - v|$, we use as many as $\tilde \Omega(\sqrt T)$ fine-grained bins to reduce the rounding error. Instead, when $\pi_t$ is the uniform distribution on  $\ell(p) = |p - v|$ and $\ell'(p) = |p - (v + 1/2)|$, we use as few as $O(1)$ coarse-grained bins to reduce the sampling error.

In general, the granularity of binning needs to be carefully determined based on the distribution $\pi_t$. To that end, we introduce the notion of \emph{width} $w_t$ of a distribution $\pi_t$ and choose the binning granularity to match $w_t$. Assuming that every loss $\ell\sim \pi_t$ is a V-shaped loss $\ell(p) = |p- v|$, we interpret $\pi_t$ as a distribution of $v\in [0,1]$. Intuitively, when $\pi_t$ is deterministic and supported on a single $v\in [0,1]$, it has small width and we should use fine-grained bins, whereas when $\pi_t$ is uniform on $\{v,v + 1/2\}$, it has large width and we should use coarse-grained bins.  Our exact definition of width for a general distribution on $[0,1]$ is based on a carefully constructed equation about the distance between two quantiles of $\pi_t$ (\Cref{lm:width}).  This definition allows us to optimally balance the rounding and sampling error and show the $O(\sqrt T\log T)$ expected swap regret bound for the order-reversed game. Since the number of predictions that fall in each bin can differ substantially, our full proof in \Cref{sec:non-construct} involves a case analysis that handles heavy and light bins separately.

\paragraph{Efficient Algorithm via Multi-Objective Learning.}
To convert our multi-scale binning predictor for the order-reversed game to an efficient predictor for the original \Cref{def:main}, we reduce \Cref{def:main} to a multi-objective learning task based on the concentration inequalities we used when proving the swap regret guarantee for the order-reversed game. Each concentration inequality corresponds to a constraint or an ``objective'' in multi-objective learning, and our goal is to make predictions to satisfy all of them. This is inspired by a line of work that solves similar problems (e.g.\ online calibration, multicalibration, and omniprediction) using multi-objective learning, such as the Blackwell approachability framework of \cite{blackwell} and the \emph{Adversary-Moves-First (AMF)} framework of \cite{amf}. A key ingredient needed to solve multi-objective learning tasks is an \emph{expert algorithm} that computes a convex combination of the ``objectives'' in every round. To obtain our desired swap regret bound, we  use the MsMwC algorithm of \cite{impossible} which provides a stronger guarantee than the usual multiplicative weights algorithm, following \cite{noarov2025highdimensional,hw24,swap-multicalibration}.
Our final efficient predictor makes randomized predictions over multiple bins across different scales at every round.
See \Cref{sec:efficient} for our full solution to \Cref{def:main}.

\subsection{Related Work}

We defer additional related work to Appendix \ref{sec:relatedwork}.

\section{Preliminaries}

The following standard lemma decomposes any $1$-Lipschitz convex loss $\ell:[0,1]\to \R$ into a convex combination of V-shaped losses. 
\begin{lemma}[\cite{opt-scoring-rule,ucal}]
\label[lemma]{lm:vshape}
    Let $\ell:[0,1]\to \R$ be a $1$-Lipschitz convex function. There exists a distribution $\varphi$ on $[0,1]$ and a constant $C =  (\ell(0) + \ell(1) - 1)/2$ such that
    \[
    \ell(p) = \E_{v\sim \varphi}|p - v| + C \quad \text{for every $p\in [0,1]$.}
    \]
\end{lemma}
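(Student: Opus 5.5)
The plan is to build $\varphi$ from the second derivative of $\ell$, in the distributional sense. A convex $\ell$ on $[0,1]$ has a right derivative $\ell'_+(p)$ at every $p\in[0,1)$, and we extend it by $\ell'_+(1):=\ell'_-(1)$. This function is nondecreasing and right-continuous. Because $\ell$ is $1$-Lipschitz, its values lie in $[-1,1]$. We will define $\varphi$ as a probability measure on $[0,1]$ whose distribution function is essentially $F(p)=(1+\ell'_+(p))/2$. The motivation is that for a point mass at $v$, the derivative of $p\mapsto|p-v|$ equals $2\I[v\le p]-1$. Averaging over $v\sim\varphi$ therefore gives derivative $2F(p)-1$, and we want this to equal $\ell'_+(p)$.

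\textbf{Step 1: constructing $\varphi$.} Define $\varphi$ by
\[
\varphi([0,p]) = \frac{1+\ell'_+(p)}{2}\quad\text{for } p\in[0,1),\qquad \varphi([0,1])=1 .
\]
The right-hand side is nondecreasing, right-continuous, and takes values in $[0,1]$, so this is a valid Stieltjes measure. Any mass left over, namely $\tfrac{1+\ell'_+(0)}{2}$, sits as an atom at $0$. Likewise the deficit $\tfrac{1-\ell'_-(1)}{2}$ sits as an atom at $1$.

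\textbf{Step 2: matching derivatives.} Set $g(p):=\E_{v\sim\varphi}|p-v|$. This $g$ is convex and $1$-Lipschitz. For $p\in[0,1)$ its right derivative is $\varphi([0,p])-\varphi((p,1]) = 2\varphi([0,p])-1 = \ell'_+(p)$. Since both $\ell$ and $g$ are Lipschitz, and hence absolutely continuous, they are integrals of their right derivatives, which agree almost everywhere. It follows that $\ell-g$ is constant on $[0,1]$.

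\textbf{Step 3: identifying the constant.} Write $C:=\ell-g$. Evaluating at the endpoints gives $g(0)+g(1)=\E[v]+\E[1-v]=1$. Hence $\ell(0)+\ell(1)=g(0)+g(1)+2C=1+2C$, which gives $C=(\ell(0)+\ell(1)-1)/2$.

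The only delicate point is the endpoint bookkeeping: the atoms at $0$ and $1$, and the one-sided derivative at $p=1$. Handling these carefully in Step 1 is the main obstacle. Everything else is routine.
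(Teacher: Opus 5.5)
Your proof is correct and uses the same measure as the paper: the CDF is $(1+\nabla\ell(p))/2$ on $[0,1)$, with the remaining mass placed at $1$. The only difference is the verification step. You match right derivatives and use absolute continuity, whereas the paper computes $\E_{v\sim\varphi}|p-v|$ directly from $|p-v|=\int_0^1(\I[v\le t<p]+\I[p\le t<v])\,\mathrm dt$ and Fubini after normalizing $\ell(0)+\ell(1)=1$.
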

\begin{proof}
By replacing $\ell(p)$ with $\ell(p) - C$, we can assume without loss of generality that $\ell(0) + \ell(1) = 1$.
    Since $\ell$ is convex and $1$-Lipschitz, its subgradient $\nabla \ell(p)\in [-1,1]$ is a non-decreasing function of $p\in [0,1]$. We can additionally assume without loss of generality that $\nabla \ell$ is right continuous on $[0,1)$ because we can replace $\nabla \ell(p)$ by $\lim_{p'\to p^+}\nabla \ell(p')$ while keeping $\nabla \ell(p)$ as a subgradient of $\ell$ at $p$. We can also assume without loss of generality that $\nabla \ell(1) = 1$.
    
    We let $\varphi$ be the distribution on $[0,1]$ whose CDF is $(\nabla \ell(p) + 1)/2$. For every $p,v\in [0,1]$,
    \[
    |p - v| = \int_0^1\I[v \le t < p]\mathrm dt + \int_0^1\I[p\le t < v]\mathrm dt.
    \]
Therefore,
\begin{align*}
    \E_{v\sim \varphi}|p - v| &= \int_0^1\E_{v\sim \varphi}\I[v \le t < p]\mathrm dt + \int_0^1 \E_{v\sim \varphi}\I[p\le t < v]\mathrm dt\\
    & = \int_0^p\Pr_{v\sim \varphi}[v \le t]\mathrm dt + \int_p^1\Pr_{v\sim \varphi}[v > t]\mathrm dt\\
    & = \int_0^p \frac{\nabla \ell(t) + 1}{2}\mathrm dt + \int_p^1 \frac{1 - \nabla \ell(t)}{2}\mathrm dt\\
    & = (\ell(p) - \ell(0) + p)/2 + ((1 - p) - \ell(1) + \ell(p))/2\\
    & = \ell(p) + (1 - \ell(0) - \ell(1)) / 2\\
    & = \ell(p). \qedhere
\end{align*}
\end{proof}

As we discuss in \Cref{sec:tech} and formally prove in \Cref{sec:strategy}, the minimax theorem allows us to consider the following order-reversed game in lieu of the original swap regret minimization problem (\Cref{def:main}):
\begin{problem}[Order-reversed online swap regret minimization]
\label[problem]{def:order-reversed}
Let $T$ be a positive integer and $B$ be a finite subset of $[0,1]$. We study the following sequential prediction problem.
For round $t = 1,\ldots,T$,
\begin{enumerate}
    \item adversary reveals a distribution $\pi_t$ of $1$-Lipschitz convex losses $\ell:[0,1]\to \R$;
    \item predictor chooses $p_t\in B\subseteq[0,1]$;
    \item loss $\ell_t\sim \pi_t$ is sampled and revealed, and predictor incurs loss $\ell_t(p_t)$.
\end{enumerate}
The predictor's goal is to minimize the expected swap regret $\E[\sr(p_{1:T}, \ell_{1:T})]$ defined in \eqref{eq:sr}.
\end{problem}

\section{Non-constructive Proof}
\label{sec:non-construct}

We show a prediction strategy that achieves $O(\sqrt T \log T)$ expected swap regret for the order-reversed game in \Cref{def:order-reversed} (\Cref{lm:truthful-reversed}). By the minimax argument we formally describe in \Cref{sec:strategy}, this implies the same expected swap regret guarantee for \Cref{def:main}, thus proving a non-constructive version of \Cref{thm:main} that removes the running time guarantee.

\subsection{Width of a Distribution}
We introduce the notion of \emph{width} for distributions on the unit interval $[0,1]$ in \Cref{lm:width} below. This notion plays a central role in our multi-scale binning idea for solving \Cref{def:order-reversed}.
\begin{definition}
\label[definition]{def:quantile}
    Let $\varphi$ be an arbitrary distribution on $[0,1]$. For $z\in [0,1]$, we say $q\in [0,1]$ is a $z$-quantile of $\varphi$ if
    \[
    \Pr_{v\sim \varphi}[v < q] \le z \quad \text{and} \quad \Pr_{v\sim \varphi}[v \le q] \ge z.
    \]
    We use $Q_\varphi(z)\subseteq [0,1]$ to denote the set of all $z$-quantiles of $\varphi$.
\end{definition}

Intuitively, if we eventually choose bins of width $w$, there are two things we want to balance. First is the width $w$ itself, since it contributes directly to the rounding error. The second is the reciprocal of the probability mass in a median-centered interval of this width, since this provides a rough estimate of the number of bins we expect to play (and therefore contributes to the sampling error). The following definition of width balances these two quantities. 

\begin{lemma}
\label[lemma]{lm:width}
    Let $\varphi$ be a distribution on $[0,1]$ and let $\gamma\in (0,1]$ be an arbitrary parameter. There exists a unique value $w\in [\gamma,1]$ such that
    \[
    w\in Q_\varphi\left(\frac 12 + \frac {\gamma}{2w}\right) - Q_\varphi\left(\frac 12 - \frac {\gamma}{2w}\right).
    \]
Here, the difference $Q_1 - Q_2$ between two sets $Q_1$ and $Q_2$ is defined as the set $\{q_1 - q_2\,|\,q_1\in Q_1,q_2\in Q_2\}$. That is, there exist $\alpha,\beta\in [0,1]$ such that
\[
\alpha\in Q_\varphi\left(\frac 12 - \frac {\gamma}{2w}\right),\quad \beta \in Q_\varphi\left(\frac 12 + \frac {\gamma}{2w}\right), \quad \beta - \alpha = w.
\]
We say $w$ is the $\gamma$-\emph{width} of $\varphi$.
\end{lemma}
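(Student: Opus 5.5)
Define, for $w\in[\gamma,1]$, the level $z(w) = \gamma/(2w)\in(0,1/2]$. Consider the set-valued map
\[
D(w) := Q_\varphi\left(\tfrac12+z(w)\right) - Q_\varphi\left(\tfrac12 - z(w)\right).
\]
We want a unique fixed point $w\in D(w)$. The plan is to replace the quantile sets by the left and right quantile functions. For $z\in[0,1]$ set $q^-(z) = \inf\{q:\Pr[v\le q]\ge z\}$ and $q^+(z) = \sup\{q:\Pr[v<q]\le z\}$, with $q^-(0)=0$ and $q^+(1)=1$ by convention inside $[0,1]$. Then $Q_\varphi(z) = [q^-(z),q^+(z)]$ is a nonempty closed interval. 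Hence $D(w) = [L(w),U(w)]$, where
\[
L(w) = q^-(\tfrac12+z)-q^+(\tfrac12-z), \qquad U(w)= q^+(\tfrac12+z)-q^-(\tfrac12-z).
\]
Both $q^\pm$ are nondecreasing in $z$, and $q^+(z)\le q^-(z')$ whenever $z<z'$. As $w$ increases, $z(w)$ decreases, so the two quantile levels move toward $1/2$. Consequently $L$ and $U$ are nonincreasing in $w$. Moreover, for $w<w'$ we have $U(w')\le L(w)$, because the inner levels are strictly nested.

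For existence, let $g(w) = w - \sup D(w)$ and compare endpoints. At $w=\gamma$, $z=1/2$, so the levels are $0$ and $1$, and $D(\gamma)$ contains $q^+(1)-q^-(0)=1\ge\gamma$. Thus $w\le U(w)$ at the left endpoint. At $w=1$, every element of $D(1)$ is at most $1$. Thus $w\ge L(w)$ at the right endpoint. Set $w^* = \sup\{w\in[\gamma,1]: U(w)\ge w\}$. We claim $L(w^*)\le w^*\le U(w^*)$.

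First, the map $w\mapsto z(w)$ is continuous, and $q^+$ is right-continuous while $q^-$ is left-continuous. Taking $w_n\uparrow w^*$ with $U(w_n)\ge w_n$, the levels $\tfrac12+z(w_n)$ decrease to $\tfrac12+z(w^*)$ and $\tfrac12-z(w_n)$ increase. Right-continuity of $q^+$ from above and left-continuity of $q^-$ from below then give $U(w^*)\ge \limsup U(w_n)\ge w^*$.

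Second, if $w^*<1$, take $w_n\downarrow w^*$ with $U(w_n)<w_n$. The nesting property gives $L(w^*)\ge U(w_n)$, but we need the opposite inequality $L(w^*)\le w^*$. Here I expect to argue via the one-sided limits. As $w_n\downarrow w^*$, the quantity $U(w_n)$ converges to $q^-(\tfrac12+z^*)-q^+(\tfrac12-z^*) = L(w^*)$, by left-continuity of $q^-$ approached from below in level and right-continuity of $q^+$ approached from above. This yields $L(w^*)\le w^*$.

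This one-sided continuity bookkeeping is the main obstacle. I would verify it carefully, possibly by working with the generalized inverse CDF directly.

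For uniqueness, suppose $w_1<w_2$ both satisfy the fixed-point condition. Then
\[
w_2 \le U(w_2)\le L(w_1)\le w_1,
\]
a contradiction. The middle inequality is the nesting fact. Since $\gamma>0$, the levels for $w_2$ lie strictly inside those for $w_1$, so $q^+(\tfrac12+z_2)\le q^-(\tfrac12+z_1)$ and $q^-(\tfrac12-z_2)\ge q^+(\tfrac12-z_1)$.

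Finally, given the fixed point $w$, pick $\alpha\in Q_\varphi(\tfrac12-z)$ and $\beta\in Q_\varphi(\tfrac12+z)$ with $\beta-\alpha=w$. This is possible because $D(w)$ is exactly the interval of such differences, which gives the stated reformulation.
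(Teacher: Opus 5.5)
Your proposal is correct and is essentially the paper's proof. It uses the same $w^*=\sup\{w\in[\gamma,1]: w\le \max S(w)\}$, a left-limit argument for $w^*\le U(w^*)$, a right-limit argument for $L(w^*)\le w^*$, and the strict nesting $U(w')\le L(w)$ for $w<w'$ to get uniqueness.

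The one-sided limit you flag as the main obstacle is exactly the paper's identity $\min S(w)=\sup_{w'>w}\max S(w')$. It becomes rigorous by a squeeze: for $z'\uparrow z$, the interlacing $q^-(z')\le q^+(z')\le q^-(z)$ you already established, combined with left-continuity of $q^-$, forces $q^+(z')\to q^-(z)$. The symmetric argument from above gives $q^-(z')\to q^+(z)$.
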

To prove \Cref{lm:width}, we need the following lemma that follows from basic properties of $\varphi$ as a probability measure.
\begin{lemma}
\label[lemma]{lm:measure}
Let $\varphi$ be a distribution on $[0,1]$. For every $z\in [0,1]$,
    $Q_\varphi(z)\subseteq [0,1]$ is a non-empty closed interval or a single point. Moreover, 
    \begin{align*}
    \max Q_\varphi(z) & = \begin{cases}1, & \text{if }z = 1,\\
    \inf_{z'\in (z,1]}\min Q_\varphi(z'), & \text{if }z\in [0,1);
        \end{cases}\\
    \min Q_\varphi(z) & = \begin{cases}0, & \text{if }z = 0,\\
    \sup_{z' \in [0,z)}\max Q_\varphi(z'), & \text{if }z\in (0,1].
        \end{cases}
    \end{align*}
\end{lemma}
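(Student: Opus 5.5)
I will work with the CDF $F(q) := \Pr_{v\sim\varphi}[v\le q]$ and its left limit $F(q^-) := \Pr_{v\sim \varphi}[v<q]$ (with $F(0^-)=0$). By \Cref{def:quantile}, $Q_\varphi(z) = \{q\in[0,1] : F(q^-)\le z\le F(q)\}$. The plan is to show that this set is an interval with explicit endpoints
\[
a(z):=\inf\{q\in[0,1]: F(q)\ge z\},\qquad b(z):=\sup\{q\in[0,1]: F(q^-)\le z\},
\]
and then read off the two limit formulas from monotonicity and one-sided continuity of $F$.

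\textbf{Non-emptiness and interval structure.} Since $F(1)=1\ge z$, the set $\{F(q)\ge z\}$ is non-empty. It is an up-set because $F$ is non-decreasing, and it is closed because $F$ is right-continuous, so its infimum $a(z)$ is attained. Similarly, $q\mapsto F(q^-)$ is non-decreasing and left-continuous with $F(0^-)=0\le z$, so $\{F(q^-)\le z\}$ is a non-empty closed down-set whose supremum $b(z)$ is attained. Hence $Q_\varphi(z) = [a(z),1]\cap[0,b(z)]$. To see that this intersection is non-empty, I will check $a(z)\le b(z)$. Take $q=a(z)$. If $q=0$ then $F(q^-)=0\le z$. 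Otherwise, every $q'<q$ has $F(q')<z$, so $F(q^-)=\lim_{q'\uparrow q}F(q')\le z$. In either case $q$ lies in both sets, so $Q_\varphi(z)=[a(z),b(z)]$ is a non-empty closed interval, possibly a single point. This also gives $\min Q_\varphi(z)=a(z)$ and $\max Q_\varphi(z)=b(z)$.

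\textbf{Limit formulas.} The boundary cases are immediate: $b(1)=1$ because $F(1^-)\le 1$, and $a(0)=0$ because $F(0)\ge 0$. For $z<1$ I must show $b(z)=\inf_{z'>z}a(z')$.
\begin{itemize}
\item[($\ge$)] Let $q^*$ denote this infimum and take $q<q^*$. Then $q<a(z')$ for every $z'>z$, so $F(q)<z'$ for all $z'>z$, hence $F(q)\le z$. Any $q''<q^*$ can be sandwiched as $q''<q<q^*$, which gives $F(q''^-)\le F(q)\le z$; the same conclusion holds for $q''=q^*$ by left-continuity of $F(\cdot^-)$, and holds trivially if $q^*=0$. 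Thus $b(z)\ge q^*$.
\item[($\le$)] If $q>b(z)$ then $F(q^-)>z$. Choose $z'\in(z,F(q^-))$; then some $q'<q$ has $F(q')\ge z'$, so $a(z')\le q'<q$. Hence $q^*\le b(z)$.
\end{itemize}
The formula $a(z)=\sup_{z'<z}b(z')$ for $z>0$ follows by the symmetric argument, or by applying the previous one to the reflected distribution of $1-v$, under which $z$-quantiles map to $(1-z)$-quantiles.

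\textbf{Main obstacle.} The hardest part is keeping the strict versus non-strict inequalities and the one-sided continuity straight, particularly at the atoms of $\varphi$ and at the endpoints $q^*\in\{0,1\}$. The reflection trick avoids having to redo the second identity from scratch.
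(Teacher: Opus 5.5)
Your proof is correct. You identify $Q_\varphi(z)=[a(z),b(z)]$ using right-continuity of $F$ and left-continuity of $q\mapsto F(q^-)$, check $a(z)\le b(z)$, and prove $b(z)=\inf_{z'\in(z,1]}a(z')$; all of this is sound. Two small points deserve a word: $z'<F(q^-)\le 1$ guarantees $z'\in(z,1]$, and the case $b(z)=1$ is trivial. The reflection to the law $\psi$ of $1-v$ does satisfy $Q_\psi(1-z)=1-Q_\varphi(z)$, so it correctly converts the first identity into the second. The paper gives no proof of this lemma; it only says it follows from basic properties of $\varphi$ as a probability measure, and your argument is exactly that verification.
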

\begin{proof}[Proof of \Cref{lm:width}]
For every $w\in [\gamma,1]$, define
\[
S(w):= Q_\varphi\left(\frac 12 + \frac {\gamma}{2w}\right) - Q_\varphi\left(\frac 12 - \frac {\gamma}{2w}\right).
\]
By \Cref{lm:measure}, for every $w\in [\gamma,1]$, $S(w)$ is a non-empty closed interval or a single point, and $S(w)\subseteq[0,1]$.  
Our goal is to show that there exists a unique $w^*\in [\gamma,1]$ such that $w^*\in S(w^*)$. By \Cref{lm:measure},
    \begin{align}
    \max S(w) & = \begin{cases}1, & \text{if }w = \gamma,\\
    \inf_{w' \in [\gamma,w)}\min S(w'), & \text{if }w\in (\gamma,1];
        \end{cases} \label{eq:Sw-1}\\
    \min S(w) & = \sup_{w' \in (w,1]}\max S(w'), \quad \text{if }w\in [\gamma,1).\label{eq:Sw-2}
    \end{align}
Define $W:=\{w\in [\gamma,1] \, |\, w \le \max S(w)\}$.
By \eqref{eq:Sw-1}, we have $\max S(\gamma) = 1$, so $\gamma\in W$. In particular, $W$ is a non-empty set. Let $w^*:= \sup W$. We show that $w^*\in S(w^*)$.

We first show $w^* \le \max S(w^*)$. This holds trivially if $w^* = \gamma$. When $w^* > \gamma$, by \eqref{eq:Sw-1}, for every $w\in (\gamma,w^*)\cap W$, we have $w \le \max S(w) = \inf _{w'\in [\gamma,w)}\min S(w')$. Taking the limit $w\to w^*$, we get $w^*\le \inf_{w'\in [\gamma,w^*)}\min S(w') = \max S(w^*)$.

Now we show $w^* \ge \min S(w^*)$ This holds trivially if $w^* = 1$. When $w^* < 1$, by the definition of $w^* = \sup W$, for every $w\in (w^*,1]$, we have $w > \max S(w)$. Taking the limit $w \to w^*$, we have $w^* \ge \lim_{w\to (w^*)^+}\max S(w) = \sup_{w \in (w^*,1]}\max S(w)  = \min S(w^*)$ by \eqref{eq:Sw-2}. 

We have proved that $\min S(w^*) \le w^* \le \max S(w^*)$, which implies that $w^*\in S(w^*)$. To show that $w^*$ is the unique value with this property, we note that for every $w < w^*$, we have 
\[
w < w^* \le \max S(w^*) = \inf_{w' < w^*} \min S(w') \le \min S(w),
\]
which means that $w\notin S(w)$. Similarly, for every $w > w^*$, we also have $w\notin S(w)$.
\end{proof}

\subsection{Truthful Predictor with Multi-scale Binning}
\label{sec:truthful}
We are now ready to describe our prediction strategy that achieves $O(\sqrt T\log T)$ expected swap regret for \Cref{def:order-reversed}. In fact, we prove a stronger result: given any $\delta\in (0,1/T]$ as input, our strategy guarantees $O\big(\sqrt{(T\log T)\log(1/\delta)}\big)$ swap regret with probability at least $1-\delta$.

Let $\gamma\in (0,1]$ be a parameter we determine later. We define $R$ as the set of bin scales we consider:
\begin{equation}
    \label{eq:R}
    R:= \{\gamma, 2\gamma, 4\gamma, 8\gamma,\ldots\}\cap [0,1].
\end{equation}
We will always ensure $\gamma = \Omega(1/\sqrt T)$, which implies $|R| = O(\log T)$. For every $r\in R$, we define $B_r$ as the bins at scale $r$:
\begin{equation}
    \label{eq:Br}
    B_r:=\{0, r, 2r, 3r,\ldots\}\cap [0,1].
\end{equation}
Clearly, $|B_r| = O(1/r)$. We define 
\begin{equation}
\label{eq:Theta}
B:= \bigcup\nolimits_{r\in R}B_r \quad \text{and} \quad \Theta:= \{(r,b):r\in R,b\in B_r\}. 
\end{equation}
We have 
$
|B|\le |\Theta| = \sum_{r\in R}|B_r| = O(1/\gamma) = O(\sqrt T).
$
We use the following truthful predictor to achieve the $O\big(\sqrt{(T\log T)\log(1/\delta)}\big)$ swap regret bound for \Cref{def:order-reversed}.
\paragraph{Truthful Predictor with Multi-scale Binning.}
We choose $\gamma := \sqrt{(\ln T)(\ln(1/\delta))/T}$. If $\gamma > 1$, the trivial swap regret upper bound $T$ already implies the desired $O\big(\sqrt{(T\log T)\log(1/\delta)}\big)$ bound. We thus assume without loss of generality that $\gamma \le 1$.
In each round $t = 1,\ldots,T$, we take the following steps:
\begin{enumerate}
    \item We receive a distribution $\pi_t$ from the adversary. Here $\pi_t$ is a distribution of $1$-Lipschitz convex loss functions $\ell:[0,1]\to \R$. By \Cref{lm:vshape}, each $\ell$ corresponds to a distribution $\varphi$ on $[0,1]$. Thus we can equivalently view $\pi_t$ as a meta-distribution of distributions $\varphi$ on $[0,1]$. We define distribution $\bar \varphi_t$ on $[0,1]$ as the mixture of distributions $\varphi$ drawn from $\pi_t$.
    \item We define $w_t\in [\gamma,1]$ to be the $\gamma$-width of $\bar \varphi_t$ (\Cref{lm:width}).
This implies the existence of $\alpha_t,\beta_t\in [0,1]$ such that
\[
\alpha_t\in Q_{\bar \varphi_t}\left(\frac 12 - \frac{\gamma}{2w_t}\right),\quad \beta_t \in Q_{\bar \varphi_t}\left(\frac 12 + \frac{\gamma}{2w_t}\right), \quad \beta_t - \alpha_t = w_t.
\]
\item We let $r_t$ be the unique value in $R$ such that $w_t\in [r_t,2r_t)$. Since $\beta_t - \alpha_t = w_t \ge r_t$, there exists $b_t\in B_{r_t}\subseteq B$ such that $b_t\in [\alpha_t,\beta_t]$. We output $p_t = b_t$.
\item We observe loss $\ell_t$ drawn from $\pi_t$, and let $\varphi_t$ be the distribution on $[0,1]$ corresponding to $\ell_t$ by \Cref{lm:vshape}.
\end{enumerate}

\begin{lemma}
\label[lemma]{lm:truthful-reversed}
    For every integer $T\ge 2$ and every $\delta\in (0,1/T]$, the truthful predictor above achieves $O\big(\sqrt{(T\log T)\log(1/\delta)}\big)$ swap regret with probability at least $1-\delta$ for \Cref{def:order-reversed}. In particular, setting $\delta = 1/T$, we get an $O(\sqrt T\log T)$ expected swap regret guarantee.
\end{lemma}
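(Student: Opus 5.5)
The plan is to bound the swap regret separately for each scale-bin pair, and to split each pair's contribution into a drift term controlled by the quantile choice and a martingale deviation term.

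\textbf{Reduction to scale-bin pairs.} For $\theta=(r,b)\in\Theta$, let $T_\theta:=\{t:(r_t,b_t)=\theta\}$ and $n_\theta:=|T_\theta|$. Any swap function applied to $p_t$ is also a function of $(r_t,b_t)$, so
\[
\sr(p_{1:T};\ell_{1:T})\le\sum_{\theta\in\Theta}\sup_{s\in[0,1]}\sum_{t\in T_\theta}\big(\ell_t(b)-\ell_t(s)\big).
\]
By \Cref{lm:vshape}, write $\ell_t(p)=\E_{v\sim\varphi_t}|p-v|+C_t$, and let $F_t$ be the CDF of $\varphi_t$. Then $\ell_t'(x)=2F_t(x)-1$ almost everywhere, and $\E[F_t(x)\mid\text{past},\pi_t]=\bar F_t(x)$, the CDF of $\bar\varphi_t$. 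Take $s>b$ (the case $s<b$ is symmetric). We get
\[
\sum_{t\in T_\theta}\big(\ell_t(b)-\ell_t(s)\big)=-\int_b^s\sum_{t\in T_\theta}\big(2F_t(x)-1\big)\,dx=-\int_b^s\big(2D_\theta(x)+\Delta_\theta(x)\big)\,dx.
\]
Here $D_\theta(x):=\sum_{t\in T_\theta}(F_t(x)-\bar F_t(x))$ is the sampling term and $\Delta_\theta(x):=\sum_{t\in T_\theta}(2\bar F_t(x)-1)$ is the drift term.

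\textbf{Drift from multi-scale binning.} For $t\in T_\theta$ we have $w_t\in[r,2r)$ and $b\in[\alpha_t,\beta_t]$ with $\beta_t-\alpha_t=w_t$. By \Cref{def:quantile}, for $x\ge b\ge\alpha_t$ we get $2\bar F_t(x)-1\ge-\gamma/w_t\ge-\gamma/r$. For $x\ge b+2r>\beta_t$ we get $2\bar F_t(x)-1\ge\gamma/w_t\ge\gamma/(2r)$. Hence $\Delta_\theta(x)\ge-n_\theta\gamma/r$ on $[b,b+2r]$ and $\Delta_\theta(x)\ge n_\theta\gamma/(2r)$ beyond $b+2r$. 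Now suppose that uniformly over $x$ we have $|D_\theta(x)|\le c\sqrt{n_\theta L}+cL$, with $L:=\ln(T/\delta)=O(\ln(1/\delta))$. Then the contribution of $\theta$ is at most
\[
2n_\theta\gamma+O\big(r\sqrt{n_\theta L}+rL\big)+\int_{b+2r}^1\max\Big(0,\,O\big(\sqrt{n_\theta L}+L\big)-\tfrac{n_\theta\gamma}{2r}\Big)dx.
\]

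\textbf{Heavy and light pairs.} Call $\theta$ heavy if $n_\theta\ge Kr^2L/\gamma^2$ for a large constant $K$. Since $r\ge\gamma$, this also gives $n_\theta\ge KL$.

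For heavy $\theta$ the integral vanishes, and $r\sqrt{n_\theta L}\le n_\theta\gamma/\sqrt K$. So the contribution is $O(n_\theta\gamma)$, and summing over heavy pairs gives $O(T\gamma)$.

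For light $\theta$ we use the crude bound $O(\sqrt{n_\theta L}+L)=O(rL/\gamma)$. There are $O(1/r)$ bins at scale $r$ and $|R|=O(\log T)$ scales, so light pairs contribute $O(L\log T/\gamma)$ in total.

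With $\gamma=\sqrt{(\ln T)(\ln(1/\delta))/T}$, both totals are $O\big(\sqrt{(T\log T)\log(1/\delta)}\big)$. The expectation bound follows with $\delta=1/T$, adding $\delta\cdot T=1$ for the failure event.

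\textbf{Main obstacle: the concentration step.} The sets $T_\theta$ are chosen adaptively, so I would work with the martingale $\sum_{t\le T}\I[(r_t,b_t)=\theta](F_t(x)-\bar F_t(x))$. Its increments lie in $[-1,1]$ and its conditional variance is at most $n_\theta$. I would first apply Freedman's inequality, peeling over dyadic values of the random $n_\theta$. I would then take a union bound over $\theta\in\Theta$ with $|\Theta|=O(\sqrt T)$, over the peeling levels, and over a grid of $x$ values. Passing from the grid to all $x$ uses monotonicity of $F_t$ and $\bar F_t$: both are nondecreasing, so $D_\theta$ between grid points is sandwiched up to the change of $\sum_t\bar F_t$. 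To control this, I would place the grid at $O(T)$ quantile points of $\sum_{t\in T_\theta}\bar\varphi_t$. That sum is random, so as a fallback I would use the grid $\{k/T^2\}$ plus the bound $n_\theta\cdot T^{-2}$ on any additional drift, and handle atoms by treating left limits separately. This bookkeeping is the step I expect to need the most care. All failure probabilities combine to $\delta$ after rescaling $L$ by a constant.
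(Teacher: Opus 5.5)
\textbf{Verdict: there is a genuine gap, in the concentration step you flagged.} Your reduction to scale--bin pairs, the drift bounds from $b\in[\alpha_t,\beta_t]$ and $w_t\in[r,2r)$, the heavy threshold (which matches the paper's $CTr^2/\ln T$), and the final arithmetic are all fine.

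The step you flagged cannot be closed by bookkeeping, because the uniform bound $\sup_x|D_\theta(x)|\le c\sqrt{n_\theta L}+cL$ is false against an adaptive adversary. Thresholds admit no uniform martingale law of large numbers. Here is a concrete adversary.
\begin{itemize}
\item Take $r=\gamma$, a bin $b\in B_\gamma$ away from the endpoints, $\alpha=b-\tfrac34\gamma$, $\beta=b+\tfrac34\gamma$, and a constant $q<\tfrac13$.
\item Maintain a current interval $(l_t,u_t)\subseteq(\alpha,\beta)$, and let $a_t<c_t$ be its one-third and two-thirds points.
\item In each round, $\pi_t$ flips a fair coin. One outcome gives the distribution with atoms of mass $\tfrac12-q$ at $\alpha$ and at $\beta$ and mass $2q$ at $a_t$. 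The other gives the same distribution with $c_t$ in place of $a_t$.
\item After the $a_t$ outcome the interval becomes $(a_t,c_t)$; after the $c_t$ outcome it becomes $(c_t,u_t)$.
\end{itemize}
Then $Q_{\bar\varphi_t}(z)=\{\alpha\}$ for $0<z<\tfrac12-q$ and $Q_{\bar\varphi_t}(z)=\{\beta\}$ for $\tfrac12+q<z<1$. So the $\gamma$-width is $\tfrac32\gamma$, and the predictor plays $\theta=(\gamma,b)$ in every round.

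Now take $x^*$ in the final interval. The difference $F_t(x^*)-\bar F_t(x^*)$ equals $q$ after an $a_t$ outcome and $0$ after a $c_t$ outcome. Hence $D_\theta(x^*)\approx qT/2$ with high probability, which is linear in $n_\theta=T$.

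Your fallback hides exactly this. The change of $\sum_t\bar F_t$ across a cell of $\{k/T^2\}$ is not at most $n_\theta T^{-2}$, since $\bar F_t$ is not Lipschitz; here it jumps by $q$ inside a single cell. A grid at quantiles of the random measure $\sum_{t\in T_\theta}\bar\varphi_t$ is data-dependent, so no union bound applies to it.

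\textbf{The missing idea.} Use monotonicity of the realized CDFs $F_t$ before concentrating, just as you used monotonicity of $\bar F_t$ for the drift. Set $g_\theta(x):=\sum_{t\in T_\theta}(2F_t(x)-1)$, which is nondecreasing. Then for $s>b$,
\[
\sum_{t\in T_\theta}\big(\ell_t(b)-\ell_t(s)\big)=-\int_b^s g_\theta(x)\,dx\le-\min(s-b,2r)\,g_\theta(b)-\max(s-b-2r,0)\,g_\theta(b+2r).
\]
The case $s<b$ is symmetric, using the left limits of $g_\theta$ at $b$ and at $b-2r$.

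These four points are deterministic once $\theta$ is fixed. So Freedman's inequality with your peeling, plus a union bound over $\Theta$, is legitimate. Your drift bounds then give
\[
g_\theta(b)\ge-n_\theta\gamma/r-O\big(\sqrt{n_\theta L}+L\big),\qquad g_\theta(b+2r)\ge n_\theta\gamma/(2r)-O\big(\sqrt{n_\theta L}+L\big),
\]
and your heavy/light computation goes through unchanged.

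These four quantities are exactly \eqref{eq:phi-hat-1}--\eqref{eq:phi-hat-4}, and this is the paper's proof. For heavy pairs, concentration at $b\pm2r$ places an empirical median $s_{r,b}$ within $2r$ of $b$. Then \Cref{lm:suboptimality}, which is the same monotonicity bound, turns concentration at $b$ into $O(\gamma)$ regret per round.

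Alternatively, you could concentrate the integrated deviation $\int_b^s D_\theta(x)\,dx$ instead. Its increments are bounded by $|s-b|$ and it is $n_\theta$-Lipschitz in $s$, so a fixed grid in $s$ does suffice.
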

To prove \Cref{lm:truthful-reversed}, we need the following basic lemma about the median of a random variable as the minimizer of the expected distance.
\begin{lemma}
\label[lemma]{lm:suboptimality}
    Let $\varphi$ be a distribution of $v\in [0,1]$, and let $s\in Q_\varphi(1/2)$ be a median of $\varphi$. For every $b\in [0, 1]$, we have
    \[
    0 \le \E_{v\sim \varphi}|b - v| - \E_{v\sim \varphi} |s - v| \le |b - s|(1 - 2 \Pr\nolimits_{v\sim \varphi}[s < b \le v \text{ or } v \le b < s]).
    \]
\end{lemma}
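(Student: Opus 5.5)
We prove \Cref{lm:suboptimality} by writing the difference of expected distances as an integral of $|b-v|-|s-v|$ and splitting by where $v$ falls relative to $s$ and $b$. By symmetry (reflect $v\mapsto 1-v$, which maps medians to medians and preserves the event in question), we may assume $s < b$; the case $b=s$ is trivial since both sides are $0$. For $s<b$, we have the pointwise identity
\[
|b-v| - |s-v| = \begin{cases} (b-s), & v \le s,\\ b+s-2v, & s< v< b,\\ -(b-s), & v\ge b.\end{cases}
\]
In every case, $|b-v|-|s-v| \le b-s$ holds, and the middle case satisfies $b+s-2v \le b-s$.

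For the upper bound, we bound the middle region crudely by $b-s$ (since $v>s$ gives $b+s-2v< b-s$). This yields
\[
\E|b-v|-\E|s-v| \le (b-s)\Pr[v<b] - (b-s)\Pr[v\ge b] = (b-s)\bigl(1-2\Pr[v\ge b]\bigr).
\]
When $s<b$, the event $\{s<b\le v \text{ or } v\le b<s\}$ is exactly $\{v\ge b\}$, so this is the claimed upper bound $|b-s|(1-2\Pr[s<b\le v \text{ or } v\le b<s])$.

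For the lower bound, we use the median property directly. For $s<v<b$, we have $b+s-2v \ge -(b-s)$, and we group terms as follows:
\[
\E|b-v|-\E|s-v| \ge (b-s)\Pr[v\le s] - (b-s)\Pr[v>s] = (b-s)\bigl(2\Pr[v\le s]-1\bigr) \ge 0.
\]
The final inequality uses $\Pr[v\le s]\ge 1/2$ from \Cref{def:quantile}, since $s\in Q_\varphi(1/2)$. The reflected case $b<s$ instead uses $\Pr[v<s]\le 1/2$, that is, $\Pr[v\ge s]\ge 1/2$.

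The only delicate point is handling the boundary conventions, namely strict versus weak inequalities at $v=s$ and $v=b$, so that the event in the statement and the quantile definition match exactly. We will check this by placing atoms at $s$ and $b$ into the correct cases. Alternatively, the lower bound follows from the fact that the median minimizes expected absolute deviation, which is the $z=1/2$ instance of the decomposition in \Cref{lm:vshape}.
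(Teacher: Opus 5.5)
Your proof is correct and takes essentially the same route as the paper. Both reduce to $s<b$, bound $|b-v|-|s-v|$ above by $b-s$ on $\{v<b\}$ and by $s-b$ on $\{v\ge b\}$, and bound it below by $b-s$ on $\{v\le s\}$ and by $s-b$ on $\{v>s\}$, finishing with $\Pr[v\le s]\ge 1/2$. Your three-case identity and the reflection $v\mapsto 1-v$ just spell out the paper's ``by symmetry'' step.
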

\begin{proof}[Proof of \Cref{lm:suboptimality}]
    We assume without loss of generality that $b > s$. The other case $b < s$ can be handled similarly by symmetry. We have
    \[
    |b - v| - |s - v| \le \begin{cases}
        b - s,& \text{if } v < b;\\
        s - b, & \text{if }v \ge b.
    \end{cases}
    \]
    Therefore, 
    \[
    \E_{v\sim \varphi}[|b - v| - |s - v|] \le (b - s)(\Pr\nolimits_{v\sim \varphi}[v < b] - \Pr\nolimits_{v\sim \varphi}[v \ge b]) = (b - s)(1 - 2\Pr\nolimits_{v\sim \varphi}[v \ge b]).
    \]
    We also have
        \[
    |b - v| - |s - v| \ge \begin{cases}
        b - s,& \text{if } v \le s;\\
        s - b, & \text{if }v > s.
    \end{cases}
    \]
    Therefore, 
    \[
    \E_{v\sim \varphi}[|b - v| - |s - v|] \ge (b - s)(\Pr\nolimits_{v\sim \varphi}[v \le s] - \Pr\nolimits_{v\sim \varphi}[v > s]) \ge  (b - s)(1/2 - 1/2) = 0. \qedhere
    \]
\end{proof}

\begin{proof}[Proof of \Cref{lm:truthful-reversed}]
Recall $\gamma = \sqrt{(\ln T)(\ln(1/\delta))/T}$. If $\gamma > 1$, we can simply use the trivial swap regret upper bound $T$ to get $\sr(p_{1:T};\ell_{1:T}) \le T = O\big(\sqrt{(T\log T)\log(1/\delta)}\big)$. We thus assume $\gamma \le 1$ from now on.

For every pair $(r,b)\in \Theta$, we let $T_{r,b}$ denote the set of time steps $t\in \{1,\ldots,T\}$ such that $(r_t,b_t) = (r,b)$. For every $t\in T_{r,b}$, we have
\begin{align*}
\Pr_{v\sim \bar \varphi_t}[v > b + 2r] & \le \Pr_{v\sim \bar \varphi_t}[v > \beta_t] \le  \frac 12 - \frac{\gamma}{2w_t} \le  \frac 12 - \frac{\gamma}{4r},\\
\Pr_{v\sim \bar \varphi_t}[v < b - 2r] & \le  \Pr_{v\sim \bar \varphi_t}[v > \beta_t] \le  \frac 12 - \frac{\gamma}{2w_t} \le  \frac 12 - \frac{\gamma}{4r},\\
\Pr_{v\sim \bar \varphi_t}[v \ge b] & \ge  \Pr_{v\sim \bar \varphi_t}[v > \beta_t] \ge  \frac 12 - \frac{\gamma}{2w_t} \ge  \frac 12 - \frac{\gamma}{2r},\\
\Pr_{v\sim \bar \varphi_t}[v \le b] & \ge \Pr_{v\sim \bar \varphi_t}[v > \beta_t] \ge  \frac 12 - \frac{\gamma}{2w_t} \ge  \frac 12 - \frac{\gamma}{2r}.
\end{align*}
Let $\hat \varphi_{r,b}$ be the uniform mixture of the distributions $\varphi_t$ for $t\in T_{r,b}$. Note that each $\varphi_t$, as a random variable, has mean $\bar \varphi_t$. Thus by standard martingale concentration bounds (Azuma's inequality, see \Cref{lm:azuma}), with probability at least $1 - \delta^4/100$, 
\begin{align}
\Pr_{v\sim \hat \varphi_{r,b}}[v > b + 2r] & \le  \frac 12 - \frac{\gamma}{4r} + O\left(\sqrt{\frac{\log (1/\delta)}{|T_{r,b}|}}\right), \label{eq:phi-hat-1}\\
\Pr_{v\sim \hat \varphi_{r,b}}[v < b - 2r] & \le \frac 12 - \frac{\gamma}{4r} + O\left(\sqrt{\frac{\log (1/\delta)}{|T_{r,b}|}}\right), \label{eq:phi-hat-2}\\
\Pr_{v\sim \hat \varphi_{r,b}}[v \ge b] & \ge \frac 12 - \frac{\gamma}{2r} - O\left(\sqrt{\frac{\log (1/\delta)}{|T_{r,b}|}}\right), \label{eq:phi-hat-3}\\
\Pr_{v\sim \hat \varphi_{r,b}}[v \le b] & \ge \frac 12 - \frac{\gamma}{2r} - O\left(\sqrt{\frac{\log (1/\delta)}{|T_{r,b}|}}\right).\label{eq:phi-hat-4}
\end{align}
By the union bound, with probability at least $1-1/\delta$, these inequalities hold for all pairs $(r,b)\in \Theta$. It remains to show that whenever these inequalities hold, the swap regret is $ O\big(\sqrt{(T\log T)\log(1/\delta)}\big)$.
The following equation follows from the definition of $\gamma := \sqrt{(\ln T)(\ln(1/\delta))/T}$:
\begin{equation}
    \label{eq:useful-0}
    \ln(1/\delta) = T\gamma^2/\ln T.
\end{equation}

For every $(r,b)\in \Theta$, arbitrarily choose $s_{r,b}\in Q_{\hat \varphi_{r,b}}(1/2)$. We can decompose the swap regret as follows.
\begin{align}
    \sr(p_{1:T},\ell_{1:T}) & = \sum_{t = 1}^T \ell_t(p_t) - \inf_{\sigma:[0,1]\to [0,1]}\sum_{t = 1}^T \ell_t(\sigma(p_t))\notag \\
    & = \sum_{t = 1}^T \ell_t(p_t) - \sum_{b\in B}\inf_{s\in [0,1]} \sum_{t = 1}^T\I[p_t = b]\ell_t(s)\notag \\
    & \le \sum_{t = 1}^T\ell_t(p_t) - \sum_{(r,b)\in \Theta}\inf_{s\in [0,1]} \sum_{t\in T_{r,b}}\ell_t(s)\notag \\
    & = \sum_{(r,b)\in \Theta}\sum_{t\in T_{r,b}}\ell_t(p_t) - \sum_{(r,b)\in \Theta}\inf_{s\in [0,1]} \sum_{t\in T_{r,b}}\ell_t(s)\notag \\
    & = \sum_{(r,b)\in \Theta}\sum_{t\in T_{r,b}}\E_{v\sim \varphi_t}|b - v| - \sum_{(r,b)\in \Theta}\inf_{s\in [0,1]} \sum_{t\in T_{r,b}}\E_{v\sim \varphi_t}|s - v|\notag \\
        & = \sum_{(r,b)\in \Theta}\sum_{t\in T_{r,b}}\E_{v\sim \varphi_t}|b - v| - \sum_{(r,b)\in \Theta}\sum_{t\in T_{r,b}}\E_{v\sim \varphi_t}|s_{r,b} - v|\notag \\
    & = \sum_{(r,b)\in \Theta}|T_{r,b}|\cdot\E_{v\sim \hat \varphi_{r,b}}[|b - v| - |s_{r,b} - v|].\label{eq:decomposition}
\end{align}

Let $C > 0$ be a sufficiently large absolute constant.
We say a pair $(r,b)\in \Theta$ is \emph{heavy} if $|T_{r,b}| \ge CTr^2 / \ln T$. Otherwise, we say $(r,b)$ is \emph{light}. For every fixed heavy pair $(r,b)$, we have
\begin{equation}
\label{eq:heavy-0}
\sqrt{\frac{\ln (1/\delta)}{|T_{r,b}|}} \le \sqrt{\frac{(\ln(1/\delta))(\ln T)}{Cr^2T}} = \frac{\gamma}{r \sqrt C}.
\end{equation}
Therefore, when $C$ is sufficiently large, \eqref{eq:phi-hat-1} and \eqref{eq:phi-hat-2} imply
\[
\Pr_{v\sim \hat \varphi_{r,b}}[v > b + 2r] \le 1/2, \quad \Pr_{v\sim \hat \varphi_{r,b}}[v < b - 2r] \le 1/2,
\]
so there exists $s_{r,b}\in Q_{\hat \varphi}(1/2)$ such that $|s_{r,b} - b|\le 2r$. Similarly, \eqref{eq:phi-hat-3} and \eqref{eq:phi-hat-4} imply
\[
\Pr_{v\sim \hat \varphi_{r,b}}[v \ge b] \le 1/2 - O\left(\frac \gamma r\right), \quad \Pr_{v\sim \hat \varphi_{r,b}}[v \le b] \le 1/2 - O\left(\frac \gamma r\right),
\]
so by \Cref{lm:suboptimality},
\begin{equation}
\label{eq:reversed-heavy}
\E_{v\sim \hat \varphi_{r,b}}[|v - b| - |v - s_{r,b}|] \le 2r \cdot O\left(\frac{\gamma}{r}\right)  = O\left(\gamma\right).
\end{equation}

For every light pair $(r,b)$, we have the reversed version of inequality \eqref{eq:heavy-0}:
\[
\sqrt{\frac{\ln (1/\delta)}{|T_{r,b}|}} \ge \frac{\gamma}{r \sqrt C}.
\]
Therefore, \eqref{eq:phi-hat-3} and \eqref{eq:phi-hat-4} imply
\[
\Pr_{v\sim \hat \varphi_{r,b}}[v \ge b] \le 1/2 - O\left(\sqrt{\frac{\log (1/\delta)}{|T_{r,b}|}}\right), \quad \Pr_{v\sim \hat \varphi_{r,b}}[v \le b] \le 1/2 - O\left(\sqrt{\frac{\log (1/\delta)}{|T_{r,b}|}}\right).
\]
By \Cref{lm:suboptimality} and the trivial bound $|b - s_{r,b}| \le 1$ for an arbitrary $s_{r,b}\in Q_{\hat \varphi_{r,b}}(1/2)$, we have
\[
\E_{v\sim \hat \varphi_{r,b}}[|v - b| -|v - s_{r,b}|] \le O\left(\sqrt{\frac{\log (1/\delta)}{|T_{r,b}|}}\right).
\]
Therefore,
\begin{align}
|T_{r,b}|\cdot \E_{v\sim \hat \varphi_{r,b}}[|v - b| -|v - s_{r,b}|] & = O\left(\sqrt{|T_{r,b}|}\cdot \sqrt{\log(1/\delta)}\right)\notag \\
& = O\left(\sqrt{Tr^2/\log T} \cdot \sqrt{T\gamma^2/\log T}\right)\tag{by \eqref{eq:useful-0}} \\
& = O\left(Tr\gamma/\log T\right).\label{eq:reversed-light}
\end{align}
Combining \eqref{eq:reversed-heavy} and \eqref{eq:reversed-light}, we have
\begin{align*}
    & \sr (p_{1:T};\ell_{1:T})\\
    \le {} &  \sum_{(r,b)\in \Theta}|T_{r,b}|\cdot \E_{v\sim \hat \varphi_{r,b}}[|v - b| - |v - s_{r,b}|]\\
    = {} &  \sum_{\text{heavy }(r,b)}|T_{r,b}|\cdot \E_{v\sim \hat \varphi_{r,b}}[|v - b| - |v - s_{r,b}|] + \sum_{\text{light }(r,b)}|T_{r,b}|\cdot \E_{v\sim \hat \varphi_{r,b}}[|v - b| - |v - s_{r,b}|]\\
    \le {} & O\left(\gamma\right)\cdot \sum_{\text{heavy }(r,b)}|T_{r,b}| + \sum_{\text{light }(r,b)}O\left(Tr\gamma/\log T\right)\\
    \le {} & O\left(\gamma\right)\cdot T + \sum_{r\in R}O\left(T\gamma/\log T\right) \tag{because $|B_r|  = O(1/r)$}\\
    = {} &  O(T\gamma) \tag{because $|R| = O(\log T)$}\\
    = {} & O\left(\sqrt{(T\log T)\log(1/\delta)}\right).\qedhere
    \end{align*}
\end{proof}

\section{Efficient Algorithm}
\label{sec:efficient}

In the previous section, we proved the \emph{existence} of a randomized prediction strategy achieving $O(\sqrt T\log T)$ expected swap regret. We now explicitly construct such a prediction algorithm to obtain the running time guarantee of \Cref{thm:main}.

The challenge is that we can no longer rely on the minimax theorem and work with the order-reversed \Cref{def:order-reversed} where we see the distribution $\pi_t$ of loss functions before making a prediction $p_t$. Instead, we need to directly work with \Cref{def:main}, where we compute a distribution $\rho_t$ of randomized predictions $p_t$ based only on information from previous rounds, with the loss function $\ell_t$ revealed only after we commit to the distribution $\rho_t$. Inspired by the analysis in the previous section, in each round we output not just a randomized prediction $p_t = b_t\in B\subseteq [0,1]$, but also a scale $r_t\in R$. Here we define $R,B_r,B,\Theta$ as in \eqref{eq:R}-\eqref{eq:Theta} in the previous section.
As before, for every $(r,b)\in \Theta$, we define $T_{r,b}:=\{t\in [T]\, |\, (r_t,b_t) = (r,b)\}$, and define $\hat \varphi_{r,b}$ as the uniform mixture of $\varphi_t$ over $t\in T_{r,b}$.
Based on our proof of \Cref{lm:truthful-reversed} in \Cref{sec:non-construct}, it suffices to compute a distribution $\kappa_t$ of $(r_t,b_t)\in \Theta$ in each round such that 
inequalities \eqref{eq:phi-hat-1}-\eqref{eq:phi-hat-4}  hold simultaneously for all pairs $(r,b)\in \Theta$ with probability at least $1-\delta$.

The goal of satisfying the constraints \eqref{eq:phi-hat-1}-\eqref{eq:phi-hat-4} can be naturally formulated as an online multi-objective learning task. We apply ideas in the \emph{Adversary-Moves-First (AMF)} framework of \cite{amf} to extend our prediction strategy for the order-reversed game (\Cref{def:order-reversed}) to the original game (\Cref{def:main}). The idea is to satisfy a different mixture (i.e.\ convex combination) of the constraints at every round, and use an \emph{expert algorithm} to choose the mixtures so that satisfying these mixtures implies satisfying \emph{every} individual constraint in hindsight. Similarly to the work of \cite{hw24,swap-multicalibration} and inspired by \cite{noarov2025highdimensional}, we use the expert algorithm MsMwC from \cite{impossible} to obtain a stronger guarantee than that of the usual multiplicative weights algorithm. This stronger guarantee is critical for achieving our final swap regret bound $O(\sqrt T\log T)$.

We first formulate the constraints \eqref{eq:phi-hat-1}-\eqref{eq:phi-hat-4} using functions, and show that every convex combination of the constraints can be satisfied by some randomized prediction which we can compute efficiently via linear programming (\Cref{lm:exists-tau}).
For every pair $(r,b)\in \Theta$ and $\xi\in \{-2,-1,+1,+2\}$, we define function $h_{r,b,\xi}:\Theta\times [0,1]\to [-1,1]$ as follows. For every $(r',b')\in \Theta$ and $v\in [0,1]$,
\begin{align*}
h_{r,b,+2}(r',b',v) & = \I[(r',b') = (r,b)] \left(\I[v > b + 2r] - \left(\frac 12 - \frac{\gamma}{4r}\right)\right),\\
h_{r,b,-2}(r',b',v) & = \I[(r',b') = (r,b)] \left(\I[v < b - 2r] - \left(\frac 12 - \frac{\gamma}{4r}\right)\right),\\
h_{r,b,+1}(r',b',v) & = \I[(r',b') = (r,b)] \left(\left(\frac 12 - \frac{\gamma}{2r}\right) - \I[v \ge b] \right),\\
h_{r,b,-1}(r',b',v) & = \I[(r',b') = (r,b)] \left( \left(\frac 12 - \frac{\gamma}{2r}\right) - \I[v \le b] \right).
\end{align*}
\begin{lemma}
\label[lemma]{lm:exists-tau}
Let $\gamma\in (0,1]$ be a parameter and define $R,B_r,B,\Theta$ as in \eqref{eq:R}-\eqref{eq:Theta}.
    Let $\bar h:\Theta\times [0,1]\to [-1,1]$ be a convex combination of the functions  $h_{r,b,\xi}$ for $(r,b)\in \Theta,\xi\in \{-2,\allowbreak -1,\allowbreak +1,\allowbreak +2\}$. That is, $\bar h$ is given by weights $w_{r,b,\xi}\ge 0$ satisfying $\sum_{r,b,\xi}w_{r,b,\xi} = 1$:
    \[
    \bar h = \sum_{(r,b)\in \Theta,\xi\in \{-2,-1,1,2\}}w_{r,b,\xi}h_{r,b,\xi}.
    \]
    Then there exists a distribution $\kappa$ of $(r',b')\in \Theta$ such that
    \begin{equation}
    \label{eq:exist-dist}
    \E_{(r',b')\sim \kappa}[\bar h(r',b',v)] \le 0 \quad \text{for every $v\in [0,1]$.}
    \end{equation}
Moreover, given the weights $w_{r,b,\xi}$ as input, the distribution $\kappa$ can be computed in time $\poly(|\Theta|)$ by solving a linear program.
\end{lemma}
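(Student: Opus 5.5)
The plan is to first reduce the continuum of constraints indexed by $v\in[0,1]$ to finitely many, then prove existence of $\kappa$ via the minimax theorem, using the truthful multi-scale binning choice from \Cref{sec:truthful} as the best response to each fixed distribution of $v$. The linear program then comes for free from the finite reduction.

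\emph{Step 1 (finite reduction).} For each fixed $(r',b')$, the map $v\mapsto \bar h(r',b',v)$ depends on $v$ only through the indicators $\I[v>b+2r]$, $\I[v<b-2r]$, $\I[v\ge b]$, $\I[v\le b]$ over $(r,b)\in\Theta$. So it is piecewise constant with respect to the finite set of breakpoints $P:=\{b,\,b+2r,\,b-2r : (r,b)\in\Theta\}\cap[0,1]$. Take one representative from each breakpoint and from each open interval between consecutive breakpoints (and from the end pieces). This gives a set $V$ of $O(|\Theta|)$ points, and \eqref{eq:exist-dist} holds for all $v\in[0,1]$ if and only if it holds for all $v\in V$.

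\emph{Step 2 (minimax).} Consider the finite zero-sum game in which the predictor picks $(r',b')\in\Theta$, the adversary picks $v\in V$, and the payoff is $\bar h(r',b',v)$. By von Neumann's minimax theorem,
\[
\min_\kappa \max_{v\in V}\E_{\kappa}\bar h = \max_{\mu}\min_{(r',b')}\E_{v\sim\mu}\bar h(r',b',v),
\]
where $\mu$ ranges over distributions on $V$. So it suffices to show that for every distribution $\mu$ on $[0,1]$ there is a single pair $(r',b')$ with $\E_{v\sim\mu}\bar h(r',b',v)\le 0$.

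\emph{Step 3 (best response).} Given $\mu$, I will mimic the truthful predictor. Let $w$ be the $\gamma$-width of $\mu$ (\Cref{lm:width}), with $\alpha,\beta$ as there and $\beta-\alpha=w$. Choose $r\in R$ with $w\in[r,2r)$ and $b\in B_r\cap[\alpha,\beta]$.

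\emph{Step 4 (the four constraints).} Next I check that each $h_{r,b,\xi}(r,b,\cdot)$ has nonpositive $\mu$-expectation.
\begin{itemize}
\item[(a)] Since $b\ge\alpha$, we have $b+2r>\alpha+w=\beta$. Because $\beta$ is a $(\tfrac12+\tfrac{\gamma}{2w})$-quantile,
\[
\Pr_\mu[v>b+2r]\le\Pr_\mu[v>\beta]\le \tfrac12-\tfrac{\gamma}{2w}\le\tfrac12-\tfrac{\gamma}{4r},
\]
using $w<2r$.
\item[(b)] Symmetrically, $b-2r<\alpha$, so $\Pr_\mu[v<b-2r]\le\Pr_\mu[v<\alpha]\le \tfrac12-\tfrac{\gamma}{4r}$.
\item[(c)] Since $b\le\beta$, we get $\Pr_\mu[v\ge b]\ge\Pr_\mu[v\ge\beta]\ge\tfrac12-\tfrac{\gamma}{2w}\ge\tfrac12-\tfrac{\gamma}{2r}$, using $w\ge r$.
\item[(d)] Symmetrically, $\Pr_\mu[v\le b]\ge\Pr_\mu[v\le\alpha]\ge\tfrac12-\tfrac{\gamma}{2r}$.
\end{itemize}
Every $h_{r'',b'',\xi}$ with $(r'',b'')\ne(r,b)$ vanishes at $(r,b,\cdot)$. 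Hence $\E_\mu\bar h(r,b,v)=\sum_\xi w_{r,b,\xi}\E_\mu h_{r,b,\xi}(r,b,v)\le 0$, which completes the existence part.

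\emph{Step 5 (computation).} The required $\kappa$ is a feasible point of the linear program with variables $\kappa(r',b')\ge0$, the constraint $\sum\kappa=1$, and the constraints $\sum_{(r',b')}\kappa(r',b')\bar h(r',b',v)\le 0$ for each $v\in V$. This LP has $|\Theta|$ variables and $O(|\Theta|)$ constraints, and its coefficients are computable from the weights in $\poly(|\Theta|)$ time. By Steps 2–4 it is feasible, so it can be solved in polynomial time.

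The main obstacle is the legitimacy of the minimax step over the continuous $v$-space; Step 1 resolves it by making both sides finite. The only other place needing care is the quantile bookkeeping in Step 4, which relies on the strict and non-strict inequalities in \Cref{def:quantile}.
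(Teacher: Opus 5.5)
Your proposal is correct and follows the paper's proof essentially step for step. Like the paper, you reduce to a finite set $V$ of $O(|\Theta|)$ representative points, apply the minimax theorem on $\Theta\times V$ using the truthful multi-scale binning choice (the paper's \Cref{lm:exists-ir}) as the best response to each distribution on $V$, and obtain $\kappa$ as a feasible point of an LP; your explicit check of the four quantile inequalities in Step 4 uses the right $\alpha$- and $\beta$-quantile bounds and spells out what the paper leaves implicit (the corresponding display in the proof of \Cref{lm:truthful-reversed} has typos there).
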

We prove \Cref{lm:exists-tau} using \Cref{lm:exists-ir} below and the minimax theorem. 
\begin{lemma}
\label[lemma]{lm:exists-ir}
Let $\gamma\in (0,1]$ be a parameter and define $R,B_r,B,\Theta$ as in \eqref{eq:R}-\eqref{eq:Theta}.
    For every distribution $\varphi$ of $v\in [0,1]$, there exists $(r',b')\in \Theta$ such that
    \[
    \E_{v\sim \varphi}[h_{r,b,\xi}(r',b',v)] \le 0 \quad \text{for every $(r,b)\in \Theta$ and $\xi\in \{-2,-1,+1,+2\}$.}
    \]
\end{lemma}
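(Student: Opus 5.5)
Given $\varphi$, we take $(r',b')$ to be exactly the pair the truthful predictor of \Cref{sec:truthful} would pick if it were handed $\varphi$ in place of $\bar\varphi_t$. The only nontrivial verification is the four quantile inequalities at that single pair; the rest is bookkeeping.

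Concretely, let $w\in[\gamma,1]$ be the $\gamma$-width of $\varphi$ given by \Cref{lm:width}. Then there exist $\alpha\in Q_\varphi(\frac12-\frac{\gamma}{2w})$ and $\beta\in Q_\varphi(\frac12+\frac{\gamma}{2w})$ with $\beta-\alpha=w$. Let $r'\in R$ be the unique scale with $w\in[r',2r')$; it exists since $R=\{\gamma,2\gamma,\dots\}\cap[0,1]$ and $w\in[\gamma,1]$. Because $\beta-\alpha\ge r'$, some multiple of $r'$ in $[0,1]$ lies in $[\alpha,\beta]$, so we may take $b'\in B_{r'}\cap[\alpha,\beta]$.

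Every $h_{r,b,\xi}$ carries the factor $\I[(r',b')=(r,b)]$, so for $(r,b)\ne(r',b')$ the expectation is $0$ and there is nothing to check. It remains to verify the four inequalities at $(r,b)=(r',b')$, writing $r=r'$, $b=b'$.

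For $\xi=+2$: since $b\ge\alpha$ and $w<2r$, we have $b+2r>\alpha+w=\beta$. By the quantile definition, $\Pr[v>\beta]=1-\Pr[v\le\beta]\le\frac12-\frac{\gamma}{2w}\le\frac12-\frac{\gamma}{4r}$, where the last step uses $w<2r$.

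For $\xi=-2$: symmetrically, $b-2r<\beta-w=\alpha$, and $\Pr[v<\alpha]\le\frac12-\frac{\gamma}{2w}\le\frac12-\frac{\gamma}{4r}$.

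For $\xi=+1$: since $b\le\beta$, we get $\Pr[v\ge b]\ge\Pr[v\ge\beta]\ge 1-\Pr[v<\beta]\ge\frac12-\frac{\gamma}{2w}\ge\frac12-\frac{\gamma}{2r}$, using $w\ge r$.

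For $\xi=-1$: since $b\ge\alpha$, we get $\Pr[v\le b]\ge\Pr[v\le\alpha]\ge\frac12-\frac{\gamma}{2w}\ge\frac12-\frac{\gamma}{2r}$.

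These four bounds are exactly the statements $\E_{v\sim\varphi}[h_{r,b,\xi}(r,b,v)]\le0$ for $\xi\in\{-2,-1,+1,+2\}$. (The displayed inequalities in the proof of \Cref{lm:truthful-reversed} contain typos in which quantile is used, but the intended bounds are the ones above.)

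The main obstacle is the boundary bookkeeping in this step. One must use the correct strict versus non-strict inequalities from \Cref{def:quantile} for each of $\Pr[v<\cdot]$ and $\Pr[v\le\cdot]$. One must also use the two-sided bound $r\le w<2r$ in the right direction: $w<2r$ for the $\pm2$ constraints and $w\ge r$ for the $\pm1$ constraints.
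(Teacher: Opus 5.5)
Your proposal is correct and is essentially the paper's proof. You choose $(r',b')$ exactly as the truthful predictor of \Cref{sec:truthful} would for $\varphi$, note that the indicator factor makes every constraint with $(r,b)\neq(r',b')$ vanish, and check the four quantile inequalities at $(r',b')$ using $r'\le w<2r'$ and $b'\in[\alpha,\beta]$. The paper only asserts those four inequalities, and your explicit verification of them is right, as is your remark that the displayed bounds in the proof of \Cref{lm:truthful-reversed} should use $\Pr[v<\alpha_t]$, $\Pr[v\ge\beta_t]$, and $\Pr[v\le\alpha_t]$ in place of the repeated $\Pr[v>\beta_t]$.
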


We prove \Cref{lm:exists-ir} using the truthful predictor we constructed in \Cref{sec:truthful}.
\begin{proof}[Proof of \Cref{lm:exists-ir}]
We apply the truthful predictor in \Cref{sec:truthful}.
    Define $\alpha,\beta,w$ as in \Cref{lm:width} for $\varphi$ and $\gamma$. Choose $r'\in R$ so that $w \in [r',2r')$. Choose $b'\in B_{r'}$ so that $b'\in [\alpha,\beta]$. 
    This ensures that
    \[
    \E_{v\sim \varphi}[h_{r',b', \xi}(r',b',v)] \le 0\quad \text{for every $\xi \in \{-2,-1,+1,+2\}$.}
    \]
The proof is completed by noting that $h_{r,b,\xi}(r',b',v) = 0$ if $(r,b)\ne (r',b')$.
\end{proof}

\begin{proof}[Proof of \Cref{lm:exists-tau}]
We first prove the existence of a distribution $\kappa$ satisfying \eqref{eq:exist-dist}. Although we require \eqref{eq:exist-dist} to hold for every $v\in [0,1]$, we only need to consider a finite set $V\subseteq [0,1]$ of $v$ with size $|V|= O(|\Theta|)$. This is because the functions $h_{r,b,\xi}$ only depend on comparisons between $v$ and a finite set of values $b + 2r, b - 2r$ and  $b$ for $(r,b)\in \Theta$. \

For an arbitrary set $S$, we use $\Delta_S$ to denote the set of probability distributions on $S$.
By \Cref{lm:exists-ir},
\[
\max_{\varphi\in \Delta_V}\min_{(r',b')\in \Theta}\E_{v\sim \varphi}[\bar h(r',b',v)] \le 0.
\]
By the minimax theorem,
\[
\min_{\kappa\in \Delta_\Theta}\max_{v\in V}\E_{(r',b')\sim \kappa}[\bar h(r',b',v)] \le 0.
\]
This implies the existence of distribution $\kappa$ satisfying \eqref{eq:exist-dist} for every $v\in V$, as desired.

To see that $\kappa$ can be computed in $\poly(|\Theta|)$ time, we note that for every $v\in V$, \eqref{eq:exist-dist} is a linear constraint on the probability masses of $\kappa$. Thus we can compute $\kappa$ by solving a linear program with $O(|\Theta|)$ constraints and variables.
\end{proof}

\paragraph{Expert algorithm MsMwC of \cite{impossible}.}
In each round $t = 1,\ldots,T$, our prediction algorithm first computes a convex combination $\bar h_t$ of the constraint functions. It then makes a randomized prediction using the distribution $\tau_t$ from \Cref{lm:exists-tau}. The convex combination $\bar h_t$ is given by weights $w\sps t_{r,b,\xi}\ge 0$ such that $\sum_{r,b,\xi} w\sps t_{r,b,\xi} = 1$:
\begin{equation}
\label{eq:mixture}
\bar h_t = \sum_{r,b,\xi}w_{r,b,\xi}\sps t h_{r,b,\xi}.
\end{equation}
We use the MsMwC algorithm of \cite{impossible} to compute these weights. The algorithm has a low external regret guarantee \eqref{eq:mcmwc} in the following game and runs in time $\poly(T,|\Theta|)$: in each round $t = 1,\ldots,T$,
\begin{enumerate}
    \item The MsMwC algorithm computes $w\sps t_{r,b,\xi}\ge 0$ for every $(r,b)\in \Theta$ and $\xi\in \{-2,-1,+1,+2\}$ such that $\sum_{r,b,\xi} w\sps t_{r,b,\xi} = 1$;
    \item The MsMwC algorithm receives adversarially chosen reward $u\sps t_{r,b,\xi}\in [-1,1]$ for every $(r,b)\in \Theta$ and $\xi\in \{-2,-1,+1,+2\}$.
\end{enumerate}
The MsMwC algorithm guarantees that for every $(r_0,b_0)\in \Theta$ and $\xi_0\in \{-2,-1,+1,+2\}$,
\begin{equation}
\label{eq:mcmwc}
\sum_{i = 1}^T \sum_{r,b,\xi}w\sps t_{r,b,\xi}u\sps t_{r,b,\xi} \ge \sum_{i = 1}^T u\sps t_{r_0,b_0,\xi_0} - O\left(\log T + \sqrt{\sum_{t = 1}^T\left(u\sps t_{r_0,b_0,\xi_0}\right)^2\log T}\right).
\end{equation}

\paragraph{Prediction Algorithm.} 
We are now ready to describe our efficient prediction algorithm for solving \Cref{def:main}. 
As in \Cref{sec:truthful}, we choose $\gamma := \sqrt{(\ln T)(\ln(1/\delta))/T}$ and assume without loss of generality that $\gamma \le 1$. Define $R,B_r,B,\Theta$ as in \eqref{eq:R}-\eqref{eq:Theta}. 
For round $t = 1,\ldots,T$,
\begin{enumerate}
    \item Compute weights $w\sps t_{r,b,\xi}$ using MsMwC and form the convex combination $\bar h_t: \Theta\times [0,1]\to [-1,1]$ by \eqref{eq:mixture};
    \item Compute $\kappa_t$ from $\bar h_t$ as in \Cref{lm:exists-tau};
    \item Sample $(r_t,b_t)\in \Theta$ from $\kappa_t$ and output $p_t:= b_t$;
    \item Observe loss function $\ell_t\in [0,1]$. Let $\varphi_t$ denote the distribution corresponding to $\ell_t$;
    \item For every $(r,b)\in \Theta$ and $\xi\in \{-2,-1,+1,+2\}$, provide reward 
    \[
    u_{r,b,\xi}\sps t:= \E_{(r',b')\sim \tau_t}\E_{v\sim \varphi_t}h_{r,b,\xi}(r',b', v)
    \]
    to the MsMwC algorithm.
\end{enumerate}
\begin{lemma}
\label[lemma]{lm:efficient}
Let $T > 2$ be an integer.
    Given any $\delta\in (0,1/T]$, the prediction algorithm above guarantees 
\[
\sr(p_{1:T};\ell_{1:T}) = O\left(\sqrt{(T\log T)\log(1/\delta)}\right)
\]
    with probability at least $1-\delta$ for \Cref{def:main} and runs in $\poly(T)$ time. In particular, setting $\delta = 1/T$, we get an $O(\sqrt T \log T)$ expected swap regret upper bound.
\end{lemma}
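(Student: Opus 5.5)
The plan is to show that, with probability at least $1-\delta$, inequalities \eqref{eq:phi-hat-1}--\eqref{eq:phi-hat-4} hold simultaneously for every $(r,b)\in\Theta$, with error terms of order $\sqrt{\log(1/\delta)/|T_{r,b}|}$ plus lower-order terms. The heavy/light case analysis in the proof of \Cref{lm:truthful-reversed} used only these inequalities, the decomposition \eqref{eq:decomposition}, and \Cref{lm:suboptimality}, so it can then be reused verbatim to give the swap regret bound. The running time claim is routine. Each round runs MsMwC in $\poly(T,|\Theta|)$ time and solves the linear program of \Cref{lm:exists-tau} in $\poly(|\Theta|)$ time, and $|\Theta|=O(\sqrt T)$. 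Computing the rewards $u\sps t_{r,b,\xi}$ only requires the CDF values of $\varphi_t$ at $O(|\Theta|)$ points, and these are read off from subgradients of $\ell_t$ via \Cref{lm:vshape}.

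Fix $(r,b,\xi)$ and consider the realized sum $H_{r,b,\xi}:=\sum_t \E_{v\sim\varphi_t} h_{r,b,\xi}(r_t,b_t,v)$. This sum equals $|T_{r,b}|$ times the (signed) deviation in the corresponding inequality, computed under $\hat\varphi_{r,b}$. I will split $H_{r,b,\xi}=\sum_t u\sps t_{r,b,\xi} + M_{r,b,\xi}$, where $M_{r,b,\xi}$ is a martingale whose increments are the difference between the realized term and its conditional expectation over $(r_t,b_t)\sim\kappa_t$, given $\varphi_t$. (Here $\tau_t$ in the reward definition is $\kappa_t$.) This step needs a subtlety: the adversary may choose $\ell_t$ adaptively, but it does so before $(r_t,b_t)$ is sampled, so conditional on the history and $\varphi_t$ the increment has mean zero. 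Its conditional variance is at most $\Pr_{\kappa_t}[(r_t,b_t)=(r,b)]=:q_t$.

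Next I bound $\sum_t u\sps t$. Summing MsMwC rewards, \Cref{lm:exists-tau} gives $\sum_{r,b,\xi} w\sps t u\sps t=\E_{\kappa_t}\E_{\varphi_t}\bar h_t\le 0$ for every $t$. Then \eqref{eq:mcmwc} gives $\sum_t u\sps t_{r,b,\xi}\le O(\log T+\sqrt{\sum_t (u\sps t)^2\log T})$. Since $|h_{r,b,\xi}(r',b',v)|\le \I[(r',b')=(r,b)]$, we have $|u\sps t_{r,b,\xi}|\le q_t$ and hence $\sum_t (u\sps t)^2\le \sum_t q_t=:Q_{r,b}$. This second-order bound is exactly why MsMwC is needed: it scales with the expected count $Q_{r,b}$ rather than with $T$.

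For the martingale, Freedman's inequality together with a union bound over the $O(\sqrt T)$ triples (noting $\log(1/\delta)\ge\log T$) gives $M_{r,b,\xi}\le O(\sqrt{Q_{r,b}\log(1/\delta)}+\log(1/\delta))$. A second application of Freedman to $|T_{r,b}|-Q_{r,b}$ shows $Q_{r,b}\le O(|T_{r,b}|+\log(1/\delta))$. Combining, $H_{r,b,\xi}\le O(\sqrt{|T_{r,b}|\log(1/\delta)}+\log(1/\delta))$. Dividing by $|T_{r,b}|$ recovers \eqref{eq:phi-hat-1}--\eqref{eq:phi-hat-4} up to an extra term $\log(1/\delta)/|T_{r,b}|$.

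The main obstacle is making sure this extra additive term does not spoil the heavy/light analysis. For light pairs, the contribution to regret is at most $|T_{r,b}|$ times the per-pair deviation, which is $O(\sqrt{|T_{r,b}|\log(1/\delta)}+\log(1/\delta))$. Summed over $|\Theta|=O(1/\gamma)$ pairs, the $\log(1/\delta)$ part totals $O(\log(1/\delta)/\gamma)=O(\sqrt{T\log(1/\delta)/\log T})$ by \eqref{eq:useful-0}, which is within budget. For heavy pairs, $|T_{r,b}|\ge CTr^2/\ln T$ gives $\log(1/\delta)/|T_{r,b}|\le \gamma^2/(Cr^2)\le\gamma/(Cr)$ because $r\ge\gamma$. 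This is again dominated by $\gamma/r$ for large $C$, so the heavy-case argument goes through unchanged. This yields the stated bound, and setting $\delta=1/T$ and adding $T\cdot\delta\le 1$ gives the expectation bound.
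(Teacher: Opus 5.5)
Your proposal is correct and follows the same route as the paper. The second-order MsMwC bound together with the per-round feasibility from \Cref{lm:exists-tau} controls the expected sums in terms of $E_{r,b}$ (your $Q_{r,b}$), Freedman's inequality passes to the realized $(r_t,b_t)$ and relates $E_{r,b}$ to $|T_{r,b}|$, and the heavy/light analysis of \Cref{lm:truthful-reversed} finishes; the only difference is cosmetic, since you classify pairs by the realized count $|T_{r,b}|$ and carry an extra $\log(1/\delta)/|T_{r,b}|$ term through the old analysis, whereas the paper classifies by $E_{r,b}$ and redoes the light case directly, with both giving $O(T\gamma)$.
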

\begin{proof}[Proof of \Cref{lm:efficient}] 
The running time guarantee follows from the corresponding guarantees from \Cref{lm:exists-tau} and the efficiency of the MsMwC algorithm. We prove the swap regret guarantee below.

The MsMwC algorithm ensures \eqref{eq:mcmwc}, which means that for every $(r,b)\in \Theta$ and $\xi\in \{-2,-1,+1,+2\}$,
\begin{equation}
\label{eq:no-regret}
\sum_{t = 1}^T\E_{(r',b')\sim \kappa_t}\E_{v\sim \varphi_t}[\bar h_t(r',b',v)] \ge \sum_{t = 1}^T \E_{(r',b')\sim \kappa_t}\E_{v\sim \varphi_t}[h_{r,b,\xi}(r',b',v)] - O\left(\log T + \sqrt{E_{r,b}\log T}\right),
\end{equation}
where $E_{r,b}:= \sum_{t = 1}^T\Pr_{(r',b')\sim \kappa_t}[(r',b') = (r,b)]$.

By \Cref{lm:exists-tau} and our choice of $\kappa_t$, for every $t = 1,\ldots,T$,
\[
\E_{(r',b')\sim \kappa_t}\E_{v\sim \varphi_t}[\bar h_t(r',b',v)] \le 0.
\]
By \eqref{eq:no-regret}, for every $(r,b)\in \Theta$ and $\xi\in\{-2,-1,+1,+2\}$,
\[
\sum_{t = 1}^T \E_{(r',b')\sim \kappa_t}\E_{v\sim \varphi_t}[h_{r,b,\xi}(r',b',v)] = O\left(\log T + \sqrt{E_{r,b}\log T}\right).
\]
Since $(r_t,b_t)$ is drawn from $\kappa_t$, by standard Martingale concentration (Freedman's inequality, see \Cref{lm:freedman}), with probability at least $1 - \delta^4$, for every $(r,b)\in \Theta$ and $\xi\in \{-2,-1,+1,+2\}$,
\begin{align}
\sum_{t = 1}^T \E_{v\sim \varphi_t}h_{r,b,\xi}(r_t,b_t,v) & = O\left(\log (1/\delta) + \sqrt{E_{r,b}\log (1/\delta)}\right),\label{eq:martingale-1}\\
||T_{r,b}| - E_{r,b}| & = O\left(\sqrt{E_{r,b}\log (1/\delta)}\right).\label{eq:martingale-2}
\end{align}
It remains to prove that the two inequalities above imply $\sr(p_{1:T};\ell_{1:T}) = O\left(\sqrt{(T\log T)\log(1/\delta)}\right)$.

As in the proof of \Cref{lm:truthful-reversed}, we define $\hat \varphi_{r,b}$ as the uniform mixture of $\varphi_t$ for $t\in T_{r,b}$.
This allows us to decompose the swap regret as in \eqref{eq:decomposition}. The following equation follows from the definition of $\gamma := \sqrt{(\ln T)(\ln(1/\delta))/T}$:
\begin{equation}
    \label{eq:useful}
    \ln(1/\delta) = T\gamma^2/\ln T.
\end{equation}

We say a pair $(r,b)\in \Theta$ is \emph{heavy} if $|E_{r,b}| \ge CTr^2 / \ln T$. Otherwise, we say $(r,b)$ is \emph{light}. 

For heavy pairs, by \eqref{eq:useful} we have
\[
|E_{r,b}| \ge CTr^2/\ln T \ge CT\gamma^2/\ln T = C\ln(1/\delta).
\]
Therefore, when $C$ is sufficiently large, \eqref{eq:martingale-2} implies $|T_{r,b}|\ge E_{r,b}/2$, and \eqref{eq:martingale-1} implies
\[
\sum_{t = 1}^T \E_{v\sim \varphi_t}h_{r,b,\xi}(r_t,b_t,v) = O\left(\sqrt{|T_{r,b}|\log (1/\delta)}\right) \quad \text{for every }\xi\in \{-2,-1,+1,+2\}.
\]
This means that inequalities \eqref{eq:phi-hat-1}-\eqref{eq:phi-hat-4} hold, so we have the same bound as \eqref{eq:reversed-heavy}:
\begin{equation}
\label{eq:eff-heavy}
\E_{v\sim \hat \varphi_{r,b}}[|v - b| - |v - s_{r,b}|] = O\left(\gamma\right).
\end{equation}

For light pairs, \eqref{eq:martingale-1} implies
\begin{equation}
    |T_{r,b}|\left( \frac 12 - \Pr_{v\sim \hat \varphi}[v \ge b] \right) \le \frac\gamma{2r}\cdot|T_{r,b}| + O\left(\sqrt{E_{r,b}\log(1/\delta)} \right)+ O(\log (1/\delta)). \label{eq:efficient-light}
\end{equation}
By the definition of $(r,b)$ being a light pair and \eqref{eq:useful}, we have
\begin{equation}
\label{eq:efficient-light-1}
\sqrt{E_{r,b}\ln (1/\delta)} \le \sqrt{CTr^2/\ln T}\cdot \sqrt{T\gamma^2/\ln T} = \sqrt C\cdot Tr\gamma/\ln T.
\end{equation}
By \eqref{eq:martingale-2} and the fact that $r\ge \gamma$, we have
\[
|T_{r,b}| \le E_{r,b} + O\left(\sqrt{E_{r,b}\log (1/\delta)}\right) = O\left(Tr^2/\log T\right) + O(Tr\gamma/\log T) = O\left(Tr^2/\log T\right).
\]
Therefore,
\begin{equation}
\label{eq:efficient-light-2}
\frac \gamma{2r}\cdot |T_{r,b}| = O(Tr\gamma/\log T).
\end{equation}
Plugging \eqref{eq:efficient-light-1}, \eqref{eq:efficient-light-2}, and \eqref{eq:useful} into \eqref{eq:efficient-light}, we have
\[
|T_{r,b}|\left( \frac 12 - \Pr_{v\sim \hat \varphi}[v \ge b] \right) = O(Tr\gamma/\log T) + O(T\gamma^2/\log T) = O(Tr\gamma/\log T).
\]
Similarly, we have
\[
|T_{r,b}|\left( \frac 12 - \Pr_{v\sim \hat \varphi}[v \le b] \right)= O(Tr\gamma/\log T).
\]
Therefore, by \Cref{lm:suboptimality},
\begin{equation}
\label{eq:eff-light}
|T_{r,b}|\cdot \E_{v\sim \hat \varphi_{r,b}}[|v - b| - |v - s_{r,b}|] = O(Tr\gamma/\log T).
\end{equation}
Finally, note that \eqref{eq:eff-heavy} and \eqref{eq:eff-light} correspond to \eqref{eq:reversed-heavy} and \eqref{eq:reversed-light} in the proof of \Cref{lm:truthful-reversed}, respectively. The rest of the proof is the same as the proof of \Cref{lm:truthful-reversed}.
\end{proof}

\bibliographystyle{plainnat}
\bibliography{ref}

\appendix
\section{Related Work}\label[appendix]{sec:relatedwork}

\subsection{Swap Regret Minimization}
There has been an extensive literature on swap regret minimization in the online learning literature \citep{hart2000simple, blum-mansour, hart2013simple}. A line of work studies the game-theoretic properties of swap regret minimization \citep{braverman2018selling, deng2019prior, deng2019strategizing, camara2020mechanisms, mansour2022strategizing, cai2023selling, brown2023learning, haghtalab2023calibrated}.  Recently, \citet{10.1145/3618260.3649681, peng2024fast} show that, without additional structure on the loss function and action space, swap regret admits lower bounds showing an inherent tradeoff: any algorithm must either incur regret with polynomial dependence on the number of actions (e.g., $\tilde O(\sqrt{nT})$ in the worst case) or else require $\exp(\Omega(1/\varepsilon))$ rounds exponential in the target $\varepsilon$ average swap regret. \citet{full-swap} obtain improved swap regret bound with a convex $d$-dimensional action space and structured losses, where the assumption of losses includes linearity, smoothness, strongly convex, etc. In particular, \citet{full-swap} imply an $\Tilde{O}\left(T^{2/3}\right)$ swap regret for convex and Lipschitz losses under single-dimensional action space, which we improve to the near-optimal $\Tilde{O}\left(T^{1/2}\right)$. 

\subsection{Online Calibration} There is a well-established connection between online calibration and swap regret minimization \citep{foster1997calibrated, foster1998asymptotic}. Calibrated predictions guarantee no swap regret for any best-responding decision makers. Viewed as a swap regret minimization problem, online calibration is more structured than the general setting in the sense that action space is allowed to be arbitrary, but the loss function is selected adversarially from a finite space, indexed by the random state to be predicted. On contrary, our setting considers arbitrarily adversarial loss functions with a convex and Lipschitz structure. The literature on online calibration designs algorithms with low calibration error \citep{foster1998asymptotic, abernethy2011does, luo2025simultaneous}, which mostly focuses on the $\ell_1$ calibration error \citep{sidestep, dagan2024improved}. \citet{hw24} minimizes swap regret for all downstream decision makers for binary states, which equivalently implies two loss functions for the adversary. This binary state results in an improved $\Tilde{O}(\sqrt{T})$ worst-case swap regret for downstream decision makers. Recently, \citet{peng2025high, 1050904} show that an exponential $\exp\left(\text{poly}\left(\frac{1}{\epsilon}\right)\right)$ rounds (assuming dimension $d\geq \text{poly}\left(\frac{1}{\epsilon}\right)$) is required for achieving $\epsilon$ $\ell_1$-calibration error.

\subsection{Calibration for Elicitable Properties}
\label{apdx: calibration for elicitable properties}
Calibration for elicitable properties requires a predicted statistical property is consistent with the same property of the empirical conditional distribution. \Cref{subsec: elicitable property} introduces the connections and distinctions of our swap regret minimization and existing work on calibration for elicitable properties.  \citet{jung2021moment} study the (multi-)calibration of moments and show it is impossible to calibrated variance and other higher moments. \citet{scope-multicalibration} establishes the equivalence between the possibility of (multi-)calibration and elicitability of a statistical property. \Cref{subsec: elicitable property} has introduced the connection of our work to some previous work. The line of previous work mostly focuses on the indentification calibration error in both offline and online setting, e.g., for mean \citep{hebert2018multicalibration, gopalan2022omnipredictors, gupta2022online, haghtalab2023unifying, garg2024oracle, luo2025improved, swap-multicalibration}, $q$-th quantile \citep{garg2024oracle, Rot22, swap-multicalibration}.

\paragraph{Distinctions.}
We note that prior results on calibration for elicitable properties do not apply to our swap regret of scoring functions due to the distinction in definitions. Prior work uses alternative definitions based on \emph{identification functions} of the proper scoring rule, which we refer to as \emph{identification calibration error} for elicitable properties. \citep{scope-multicalibration, swap-multicalibration}.
For any elicitable property $\Gamma$, there exists an identification function
$V:[0,1]\times Y \to \R$ such that for every distribution $q$ over $Y$ and every $\gamma\in[0,1]$, 
\[
\E_{y\sim q}\!\left[V(\gamma,y)\right] = 0
\quad\Longleftrightarrow\quad
\gamma \in \Gamma(q).
\]
Generally, the identification function $V$ is the derivative of a scoring function that elicits $\Gamma$: $V(\gamma,y)=\partial_\gamma S(\gamma,y)$ \citep{osband1985providing, lambert2011elicitation, steinwart2014elicitation}. For example, for the median, 
$V_{\mathsf{median}}(p,y)=2\I[p\ge y]-1$; for the mean, $V_{\mathsf{mean}}(p,y)=2(p-y)$; and for the $q$-quantile,
$V_{q\text{-}\mathsf{quantile}}(p,y)=q-\I[y\le p]$.

 The identification calibration error is defined as the $\ell_r$-aggregation of the identification function with results for $r\geq 2$. 
Given a sequence $(x_t,p_t,y_t)_{t=1}^T$, let
$n_p = |\{t\in[T]: p_t=p\}|$.
The $\ell_r$-calibration error in \citet{scope-multicalibration,swap-multicalibration} is
\begin{equation}
\label{eq:prior_mcal}
\mathrm{MCal}_{r}
~:=~
\sum_{p\in[0,1]}
n_p\left|
\frac{1}{n_p}\sum_{t=1}^T \I[p_t=p]\; V(p_t,y_t)
\right|^{r}.
\end{equation}
\citet{swap-multicalibration} design an algorithm that achieves $O\left(T^{\frac{1}{r+1}}\right)$ identification calibration error for several assumptions: 1) $r\geq 2$; 2) Lipschitz identification functions; 3) Lipschitz distribution of the states. Notably for 3), \citet{swap-multicalibration} and \citet{scope-multicalibration} assume the distribution of the state $Y$ is Lipschitz continuous, while we do not impose any restrictions on the distribution. 

Several distinctions arise due to different definitions. 
\begin{itemize}
    \item First, a proper scoring functions have a connection with downstream decision payoff. Given a prediction of a statistical property, the proper scoring function corresponds to the payoff of some decision maker whose decision payoff only depends on the property and best responds to the prediction. Our problem is thus equivalent to minimizing the swap regret of a downstream decision maker. 
    \item Second, the Lipschitzness of the distribution is necessary for median and quantile with identification calibration error in previous work, while our swap regret minimization do not require such an assumption, especially when the algorithm adopts a fixed binning discretization strategy of the prediction space. Intuitively, we take median calibration as an example. When $r=1$, the identification calibration error of median simplifies to the distance between the empirical quantile of the prediction $p$ and $\frac{1}{2}$: 
\begin{align*}
\mathrm{MCal}_{1}^{\mathsf{median}}
&~=~
\sum_{p\in[0,1]}
n_p\left|
2\cdot \frac{1}{n_p}\sum_{t:\,p_t=p}\I[y_t\le p]-1
\right|\\
&~=~ 2\sum_{p\in[0,1]} n_p\left|\Pr_{y\sim \tau_p}[y \le p]-\frac12\right|,
\end{align*}
i.e., for each prediction value $p$, we compute the empirical quantile level of $p$ among the associated samples $\{y_t: p_t=p\}$ and compare it to $1/2$. 

\Cref{lem: impossible identification calibration for fixed bin} shows that if an online prediction algorithm predicts from a fixed set of prediction values (which many previous algorithms do), then there exists a (non-Lipschitz) distribution where the identification calibration error is $\Theta(T)$. The main idea is that, when the true distribution of $y_t$'s is not Lipschitz and concentrates in a fixed bin, the discretization error cannot be bounded for $\mathrm{MCal}_{1}^{\mathsf{median}}$. 
\end{itemize}

\begin{lemma}
\label[lemma]{lem: impossible identification calibration for fixed bin}
    Suppose an online prediction algorithm makes predictions from a finite set $B\subseteq[0, 1]$. There exists a distribution $\tau$ such that when $\forall t, y_t\overset{\text{i.i.d.}}{\sim} \tau$, $\E[\mathrm{MCal}_{1}^{\mathsf{median}}] = \Theta(T)$. 
\end{lemma}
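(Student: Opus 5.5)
Since $B$ is finite and fixed in advance, I will pick $\tau$ so that it concentrates all its mass strictly inside a single gap between consecutive elements of $B\cup\{0,1\}$. Then every possible prediction $b\in B$ lies either entirely below or entirely above the support of $\tau$. For every round, $\I[y_t\le p_t]$ is then deterministic given $p_t$: it equals $0$ if $p_t$ is below the gap and $1$ if $p_t$ is above it. Consequently each summand in $\mathrm{MCal}_1^{\mathsf{median}}$ has $|2\cdot\frac1{n_p}\sum\I[y_t\le p]-1| = 1$, and the error is exactly $\sum_p n_p = T$, regardless of how the algorithm adapts.

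Concretely, let $B=\{b_1<\cdots<b_k\}$. If $B$ misses a point of $[0,1]$, some open interval $(a,c)\subseteq[0,1]$ with $a<c$ contains no element of $B$. One can take $(b_i,b_{i+1})$, or $[0,b_1)$ if $b_1>0$, or $(b_k,1]$ if $b_k<1$; since $B$ is finite and $[0,1]$ is uncountable, such a nondegenerate gap exists. Let $\tau$ be the point mass at $y^*=(a+c)/2$, or any distribution supported in $(a,c)$, e.g.\ uniform if a Lipschitz-free example is not required.

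Then for every $t$, $p_t\in B$ implies $p_t\ne y^*$. Moreover, $\I[y_t\le p_t]=\I[p_t>y^*]$ deterministically. For each distinct prediction $p$, the inner average is therefore $0$ or $1$, giving $|2\cdot(\cdot)-1|=1$. Summing gives $\mathrm{MCal}_1^{\mathsf{median}}=\sum_p n_p=T$ almost surely, and hence $\E[\mathrm{MCal}_1^{\mathsf{median}}]=T=\Theta(T)$. The upper bound $O(T)$ is trivial, since each summand is at most $n_p$.

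There is no real obstacle. The only point needing care is ensuring the gap is strictly open, so that $y^*\notin B$ and the indicator never equals the borderline value. If one insists that $\tau$ be a non-atomic distribution, the same argument works with $\tau$ uniform on a closed subinterval of $(a,c)$.
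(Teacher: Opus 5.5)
Your proof is correct and takes the same approach as the paper. The paper places $\tau=\mathrm{Unif}(b_i,b_{i+1})$ strictly inside the gap between adjacent $b_i<b_{i+1}$, so $\I[y_t\le p]$ is deterministic for every $p\in B$ and $\mathrm{MCal}_1^{\mathsf{median}}=T$ exactly; your use of the boundary gaps $[0,b_1)$ or $(b_k,1]$ also covers $|B|=1$, which the paper's choice of an adjacent pair tacitly excludes.
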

\begin{proof}
Let $B\subseteq[0,1]$ be finite, and write its elements in increasing order as
\[
b_1 < b_2 < \cdots < b_m.
\]
Pick any pair of adjacent values, say $b_i<b_{i+1}$. Define the outcome
distribution $\tau$ to be uniform on the open interval $(b_i,b_{i+1})$:
\[
y_t \overset{\text{i.i.d.}}{\sim} \tau := \mathrm{Unif}(b_i,b_{i+1}).
\]
(Notice $\tau$ is non-Lipschitz in the usual sense used in the identification-calibration
literature.)

Now consider \emph{any} online prediction algorithm that at each time $t$ outputs
$p_t\in B$ (possibly adaptively as a function of the past).  For each $p\in B$, let
$n_p := |\{t\in[T]: p_t=p\}|$ denote the number of times the algorithm predicts $p$.
Recall that (with the convention that the summand is $0$ when $n_p=0$)
\[
\mathrm{MCal}_{1}^{\mathsf{median}}
~=~
2\sum_{p\in B}
n_p\left|
\frac{1}{n_p}\sum_{t:\,p_t=p}\I[y_t\le p]-\frac12
\right|.
\]

We claim that for every $p\in B$, the indicator $\I[y_t\le p]$ is \emph{deterministic}
under our choice of $\tau$:
\begin{itemize}
\item If $p\le b_i$, then $y_t>b_i\ge p$ almost surely (since $y_t\in(b_i,b_{i+1})$),
so $\I[y_t\le p]=0$ a.s.
\item If $p\ge b_{i+1}$, then $y_t<b_{i+1}\le p$ almost surely, so $\I[y_t\le p]=1$ a.s.
\end{itemize}
There is no third case, because $b_i$ and $b_{i+1}$ are \emph{adjacent} in $B$, hence
no element of $B$ lies in $(b_i,b_{i+1})$.

Therefore, for each fixed $p\in B$, the empirical average inside the absolute value is
either $0$ (when $p\le b_i$) or $1$ (when $p\ge b_{i+1}$), deterministically. In either
case,
\[
\left|
\frac{1}{n_p}\sum_{t:\,p_t=p}\I[y_t\le p]-\frac12
\right|
~=~ \tfrac12,
\qquad\text{whenever } n_p>0.
\]
Plugging this into the definition gives
\[
\mathrm{MCal}_{1}^{\mathsf{median}}
~=~
2\sum_{p\in B} n_p\cdot \tfrac12
~=~
\sum_{p\in B} n_p
~=~ T,
\]
since $\sum_{p\in B} n_p = T$ always. Hence the identification calibration error is
\emph{exactly} $T$ (not merely in expectation), and in particular
\[
\mathbb{E}\bigl[\mathrm{MCal}_{1}^{\mathsf{median}}\bigr] ~=~ T ~=~ \Theta(T).
\]
This proves the lemma.
\end{proof}

\section{Martingale Concentration Bounds}
\begin{lemma}[Azuma's Inequality \citep{azuma}]
\label[lemma]{lm:azuma}
    Let $X_1,\ldots,X_n\in [-1,1]$ be a martingale difference sequence. There exists an absolute constant $C > 0$ such that for every $\delta\in (0,1/2)$, with probability at least $1-\delta$,
    \[
    \left|\sum_{i = 1}^n X_i\right| \le C\sqrt{n\log(1/\delta)}.
    \]
\end{lemma}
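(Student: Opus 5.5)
The plan is to derive Azuma's inequality from the exponential moment (Chernoff) method applied to the martingale, using Hoeffding's lemma conditionally at each step. Let $\mathcal F_i$ be the filtration with respect to which $X_i$ is $\mathcal F_i$-measurable and $\E[X_i\mid \mathcal F_{i-1}]=0$. Set $S_k=\sum_{i\le k}X_i$. For $\lambda>0$ I bound $\E[e^{\lambda S_n}]$ by peeling off one term at a time.

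The first step is the conditional Hoeffding lemma. Since $X_i\in[-1,1]$ and $\E[X_i\mid\mathcal F_{i-1}]=0$, convexity of $x\mapsto e^{\lambda x}$ on $[-1,1]$ gives $e^{\lambda x}\le \frac{1-x}{2}e^{-\lambda}+\frac{1+x}{2}e^{\lambda}$. Taking conditional expectations yields
\[
\E[e^{\lambda X_i}\mid\mathcal F_{i-1}]\le \cosh\lambda\le e^{\lambda^2/2},
\]
where the last inequality follows by comparing Taylor series termwise ($(2k)!\ge 2^k k!$).

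The second step is the tower property. We have
\[
\E[e^{\lambda S_n}]=\E\big[e^{\lambda S_{n-1}}\E[e^{\lambda X_n}\mid\mathcal F_{n-1}]\big]\le e^{\lambda^2/2}\E[e^{\lambda S_{n-1}}],
\]
so by induction $\E[e^{\lambda S_n}]\le e^{n\lambda^2/2}$. Markov's inequality then gives $\Pr[S_n\ge a]\le e^{-\lambda a+n\lambda^2/2}$. Optimizing at $\lambda=a/n$ gives $e^{-a^2/(2n)}$. The same argument applied to $-X_i$ bounds the lower tail, so
\[
\Pr[|S_n|\ge a]\le 2e^{-a^2/(2n)}.
\]

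The final step chooses $a=\sqrt{2n\ln(2/\delta)}$, which makes the failure probability at most $\delta$. For $\delta<1/2$ we have $\ln(2/\delta)\le 2\ln(1/\delta)$, so $a\le 2\sqrt{n\ln(1/\delta)}$. The claim therefore holds with $C=2$ (adjusting for the base of the logarithm if needed). No step is a real obstacle. The only point requiring care is applying Hoeffding's lemma conditionally, which uses only the boundedness of $X_i$ and its zero conditional mean. This is exactly what allows the adaptively chosen sequences in the proof of \Cref{lm:truthful-reversed} to be handled.
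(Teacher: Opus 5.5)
Your proof is correct: the convexity bound gives the conditional Hoeffding estimate $\E[e^{\lambda X_i}\mid\mathcal F_{i-1}]\le\cosh\lambda\le e^{\lambda^2/2}$, the tower property and Chernoff give $\Pr[|S_n|\ge a]\le 2e^{-a^2/(2n)}$, and your choice of $a$ yields $C=2$ for natural logarithms. The paper does not prove this lemma but simply cites Azuma, and your argument is the standard proof behind that citation.
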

\begin{lemma}[Freedman’s Inequality \citep{freedman1975tail}]
\label[lemma]{lm:freedman}
    Let $X_1,\ldots,X_n\in [-1,1]$ be a martingale difference sequence. Let $V_i:= \E[X_i^2|X_1,\ldots,X_{i-1}]$ denote the conditional variance of $X_i$. There exists an absolute constant $C > 0$ such that for every $\delta\in (0,1/2)$, with probability at least $1-\delta$ we have
    \[
    \left|\sum_{i = 1}^n X_i\right| \le C\sqrt{\log(n/\delta)\sum_{i = 1}^n V_i} + C\log(n/\delta).
    \]
\end{lemma}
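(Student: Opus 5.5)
This is the classical Freedman inequality, and I would prove it by the exponential supermartingale method together with a union bound over a geometric grid of parameters. The grid is what lets us handle the fact that the total variance $W:=\sum_{i=1}^n V_i$ is random. Let $\mathcal F_{i}$ denote the natural filtration and $S_k:=\sum_{i\le k}X_i$. By symmetry (apply the argument to $-X_i$, which is also a martingale difference sequence with the same $V_i$), it suffices to bound the upper tail $S_n$ with failure probability $\delta/2$.

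\textbf{Step 1 (one-step exponential bound).} For $\lambda\in(0,1]$ and $|x|\le 1$ we have $e^{\lambda x}\le 1+\lambda x+c\lambda^2x^2$ with $c=e-2$. Taking conditional expectations and using $\E[X_i\mid\mathcal F_{i-1}]=0$ gives $\E[e^{\lambda X_i}\mid\mathcal F_{i-1}]\le 1+c\lambda^2V_i\le e^{c\lambda^2V_i}$. Hence $M_k:=\exp\bigl(\lambda S_k-c\lambda^2\sum_{i\le k}V_i\bigr)$ is a nonnegative supermartingale with $M_0=1$. By Markov's inequality, $\Pr[M_n\ge 1/\delta']\le\delta'$. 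So with probability at least $1-\delta'$,
\[
S_n\le c\lambda W+\frac{\ln(1/\delta')}{\lambda}.
\]

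\textbf{Step 2 (grid over $\lambda$).} The optimal $\lambda\approx\sqrt{\ln(1/\delta')/W}$ depends on the random quantity $W\in[0,n]$, so it cannot be fixed in advance. I would take the grid $\lambda_j=2^{-j}$ for $j=0,1,\ldots,J$ with $J=\lceil\log_2 n\rceil+1$, set $\delta'=\delta/(2(J+1))$, and apply Step 1 at each $\lambda_j$ with a union bound. On the resulting good event, let $L:=\ln(1/\delta')$ and split into two cases.
\begin{itemize}
\item If $W\le L$, take $\lambda=1$, which gives $S_n\le cL+L=O(L)$.
\item Otherwise, let $\lambda^*=\sqrt{L/W}\in(0,1)$ and pick the grid point $\lambda_j\in(\lambda^*/2,\lambda^*]$. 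Such a point exists because $\lambda^*\ge\sqrt{L/n}\ge 2^{-J}$, using $L\ge 1$ since $\delta<1/2$. Plugging in gives $S_n\le c\lambda^*W+2L/\lambda^*=O(\sqrt{LW})$.
\end{itemize}

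\textbf{Step 3 (cleanup).} Finally, $L=\ln\bigl(2(J+1)/\delta\bigr)=O(\log(n/\delta))$, because $J=O(\log n)$ and $\delta<1/2$. Combining the two tails yields $|S_n|\le C\sqrt{\log(n/\delta)\,W}+C\log(n/\delta)$ with probability at least $1-\delta$.

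The only genuinely delicate point is Step 2: the random variance is handled by peeling over the $O(\log n)$ grid values of $\lambda$, and this union bound is exactly what produces the $\log(n/\delta)$ rather than $\log(1/\delta)$. Everything else is the routine supermartingale computation.
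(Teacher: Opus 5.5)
Your argument is correct and complete. The bound $e^{y}\le 1+y+(e-2)y^2$ for $|y|\le 1$ holds, and the supermartingale $M_k$ is valid because each $V_i$ is $\mathcal F_{i-1}$-measurable. The union bound over the grid $\lambda_j=2^{-j}$ is correctly taken \emph{simultaneously}, which is needed since the grid point you use depends on the random $W$. The case split $W\le L$ versus $W>L$ also checks out, including the existence of $\lambda_j\in(\lambda^*/2,\lambda^*]$ from $\lambda^*\ge n^{-1/2}\ge 2^{-J}$.

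The paper gives no proof of this lemma; it simply cites Freedman (1975). Freedman's theorem is stated for a \emph{fixed} variance budget. It uses an exponential supermartingale of the form $\exp\bigl(\lambda S_k-(e^{\lambda}-1-\lambda)W_k\bigr)$ with $W_k=\sum_{i\le k}V_i$, together with optional stopping, to bound the probability that $S_k\ge a$ and $W_k\le b$ for some $k$. To reach the form stated here, where the random total variance appears on the right-hand side, one must still peel over dyadic values of $b$ and take a union bound. You instead peel directly over $\lambda$ and work only at the terminal time $n$. That is all the lemma asks for, and it avoids stopping times entirely.

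One small remark: your union bound is over only $J+1=O(\log n)$ grid points, so you actually obtain $L=O(\log(\log n/\delta))$. This is slightly stronger than the stated $\log(n/\delta)$. Your closing comment that the union bound ``exactly'' produces $\log(n/\delta)$ is therefore loose, though harmless.
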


\section{Applying the Minimax Theorem}
\label[appendix]{sec:strategy}
We use the minimax theorem to complete the proof of the following non-constructive version of \Cref{thm:main} using \Cref{lm:truthful-reversed}:

\begin{theorem}[Non-constructive version of \Cref{thm:main}]
\label{thm:non-construct}
 For every positive integer $T \ge 2$, there exists a prediction strategy for \Cref{def:main} that guarantees
    \[
    \E[\sr(p_{1:T};\ell_{1:T})] = O(\sqrt T\log T).
    \]
\end{theorem}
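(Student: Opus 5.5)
We will show that the value of the predictor–adversary zero-sum game is $O(\sqrt T\log T)$. The value bound comes from \Cref{lm:truthful-reversed} via a minimax exchange. The exchange needs finite (or compact) strategy spaces, so the first step is discretization.

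\textbf{Discretization of the predictor.} We restrict the predictor to actions in the finite set $B$ from \eqref{eq:Theta}, with $\gamma$ chosen as in \Cref{sec:truthful} for $\delta=1/T$. This only helps the adversary, so it suffices to bound the value of the restricted game.

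\textbf{Discretization of the adversary.} By \Cref{lm:vshape}, every loss is $\E_{v\sim\varphi}|p-v|+C$. The additive constant $C$ cancels in the swap regret \eqref{eq:sr}, so the adversary's choice reduces to a distribution $\varphi$ on $[0,1]$. We round each such $\varphi$ to a distribution supported on a fine grid $V_\epsilon$ with spacing $\epsilon=1/T^2$, moving each point $v$ to a nearest grid point. This changes every loss value by at most $\epsilon$, so the swap regret changes by at most $2T\epsilon=o(1)$. We may therefore assume the adversary's losses come from the compact convex set $\Delta_{V_\epsilon}$.

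\textbf{Minimax exchange.} We model the $T$-round interaction as a finite-horizon extensive-form game with perfect recall. The predictor's information sets consist of past predictions and past losses; the adversary also sees realized past predictions. A randomized predictor strategy in \Cref{def:main} (choose $\rho_t$, then sample $p_t$) is exactly a behavioral strategy in this game. By Kuhn's theorem, behavioral and mixed strategies are equivalent. The payoff, expected swap regret, is bilinear in the two players' mixed strategies.

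The predictor's pure strategies form a finite set. The adversary's pure strategies are maps from histories into the simplex $\Delta_{V_\epsilon}$. Since the loss $\E_{v\sim\varphi}|p-v|$ is linear in $\varphi$, the adversary may instead choose a grid point $v$ with the predictor observing only the realized V-shaped loss, which makes the adversary's pure strategy set finite too. For a general $1$-Lipschitz convex loss the swap regret involves an infimum over $\sigma$ that is not linear in $\varphi$. We handle this by letting the adversary randomize over $v$ at each round, i.e. choosing $\ell_t=|p-v_t|$ with $v_t$ drawn from the distribution $\varphi$. The realized loss is then exactly the $\ell_t\sim\pi_t$ of \Cref{def:order-reversed}, which is precisely the setting \Cref{lm:truthful-reversed} handles, since it allows losses sampled from any distribution.

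With both pure strategy sets finite, von Neumann's minimax theorem gives
\[
\min_{\text{pred}}\max_{\text{adv}}\E[\sr]=\max_{\text{adv}}\min_{\text{pred}}\E[\sr].
\]

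\textbf{Bounding the max-min side.} Fix any mixed adversary strategy. At round $t$, conditioned on the history, it induces a distribution $\pi_t$ over the next loss. Because the adversary already sees realized past predictions, its randomness at round $t$ is independent of $p_t$ given the past. The predictor, knowing the adversary's strategy, can compute $\pi_t$ and play the truthful multi-scale predictor of \Cref{sec:truthful}. This is exactly the setting of \Cref{def:order-reversed}, so \Cref{lm:truthful-reversed} with $\delta=1/T$ gives expected swap regret $O(\sqrt T\log T)$, since failure events cost at most $T\cdot\delta=1$. Adding the $o(1)$ discretization error completes the proof.

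\textbf{Main obstacle.} The delicate step is rigorously justifying the minimax exchange over the sequential game. Three things must be checked: the payoff is bilinear in the mixed strategies, the strategy spaces are finite or compact, and the predictor's behavioral strategy in the exchanged game is implementable as a strategy in the original \Cref{def:main}. Concretely, the minimizing mixed strategy must be a distribution over deterministic history-dependent rules, which Kuhn's theorem converts into per-round $\rho_t$'s. If finiteness fails, the fallback is Sion's theorem using compactness of $\Delta_{V_\epsilon}$ and bilinearity.
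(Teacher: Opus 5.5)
\textbf{Gap: the V-shaped substitution.} Your overall architecture matches the paper's: restrict predictions to $B$, discretize the adversary, apply von Neumann's minimax theorem, and play the truthful multi-scale predictor of \Cref{lm:truthful-reversed} against each mixed adversary. However, the step you use to make the strategy spaces finite does not work.

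You replace the adversary's choice of $\varphi\in\Delta_{V_\epsilon}$, i.e.\ the deterministic loss $\ell_t(p)=\E_{v\sim\varphi}|p-v|$, by a grid point $v_t\sim\varphi$, with the predictor seeing only $|p-v_t|$. That is a \emph{different} game. Linearity of the per-round loss in $\varphi$ does not carry over to the swap regret, which contains an infimum over $\sigma$. In addition, the predictions $p_t$ now depend on the past samples $v_{1:t-1}$. So your minimax argument bounds the value of this V-shaped game, not the value of \Cref{def:main}.

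\textbf{Why the transfer back fails.} Suppose you simulate: sample $v_t\sim\varphi_t$ yourself and feed $|p-v_t|$ to the V-shaped predictor. You are then left with the extra term
\[
\sup_{\sigma}\sum_{t=1}^T\Big[\big(\ell_t(p_t)-\ell_t(\sigma(p_t))\big)-\big(|p_t-v_t|-|\sigma(p_t)-v_t|\big)\Big].
\]
This is a supremum of martingales. In general it is controlled only at the level $\sqrt{|T_b|}$ per bin, summing to $\sqrt{|B|T}\approx T^{3/4}$. That is exactly the sampling error multi-scale binning is designed to avoid.

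Convexity of $\sr$ in $\ell_{1:T}$ (Jensen) would give the comparison only if $p_{1:T}$ were fixed in advance. With adaptive predictions it fails: a predictor can decide which later rounds go into a bin based on how the earlier samples there happened to fall.

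\textbf{Related problems with finiteness.} Your claim that the predictor's pure strategies form a finite set also rests on this substitution. With losses ranging over $\Delta_{V_\epsilon}$, the histories are uncountable. The Sion fallback is not substantiated: you would need compactness and semicontinuity on a space of history-dependent strategies, not just on $\Delta_{V_\epsilon}$.

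\textbf{The repair.} Discretize the loss class itself rather than sampling V-shapes; this is what the paper does with a finite $\varepsilon$-net $\Lambda_\varepsilon$. For example, round the weights of $\varphi$ on $V_\epsilon$ to multiples of $1/N$ for a huge $N$. This changes each loss by an arbitrarily small amount in sup norm, and the loss stays deterministic, $1$-Lipschitz, and convex. Both players' pure strategy sets are then finite, so von Neumann applies.

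A mixed adversary then induces a distribution $\pi_t$ over general convex losses, and \Cref{lm:truthful-reversed} applies directly. Its proof never needs V-shaped losses: it works with the mixture $\bar\varphi_t$ and applies Azuma's inequality to the bounded quantities $\Pr_{v\sim\varphi_t}[\cdot]$ of the realized losses, measuring swap regret with the exact losses $\E_{v\sim\varphi_t}|b-v|$.
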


To apply the minimax theorem, we need to carefully specify the strategy spaces of the predictor and the adversary. 
We note that the swap regret \eqref{eq:sr} is invariant to adding constants to the loss functions $\ell_t$, so we can assume without loss of generality that $\ell_t(0) = 0$ for every $t = 1,\ldots,T$.
Let $\Lambda$ be the family of all $1$-Lipschitz convex loss functions $\ell:[0,1]\to \R$ satisfying $\ell(0) = 0$. A deterministic predictor's strategy $P = (P_1,\ldots,P_T)$ is a sequence of functions where each $P_t:[0,1]^{t - 1}\times \Lambda^{t - 1}\to [0,1]$ decides the prediction $p_t\in [0,1]$ for the $t$-th round given the previous $t -1$ rounds' transcript $(p_1,\ldots,p_{t - 1}; \ell_1,\ldots,\ell_{t -1})\in [0,1]^{t - 1}\times \Lambda^{t - 1}$. We use $\bbP_T$ to denote the set of all such strategies $P=(P_1,\ldots,P_T)$. Similarly, an adversary's strategy is $L = (L_1,\ldots,L_T)$ with $L_t:[0,1]^{t - 1}\times \Lambda^{t - 1}\to \Lambda$ for every $t = 1,\ldots,T$. We use $\bbL_T$ to denote the set of all such $L$.

Any strategy pair $(P,L)$ uniquely determines the entire game transcript $p_1,\ldots,p_T\in [0,1]$ and $\ell_1,\ldots,\ell_T\in \Lambda$. This is done by inductively setting
\[
p_t = P_t(p_1,\ldots,p_{t - 1};\ell_1,\ldots,\ell_{t - 1})\quad \text{and} \quad \ell_t = L_t(p_1,\ldots,p_{t - 1};\ell_1,\ldots,\ell_{t - 1})
\]
for $t = 1,\ldots,T$. We define $\sr(P,L):= \sr(p_{1:T};\ell_{1:T})$. For an arbitrary set $S$, we use $\Delta_S$ to denote the set of probability distributions on $S$.

\Cref{thm:non-construct} can be equivalently stated as follows:
\begin{lemma}
\label[lemma]{lm:non-construct}
    For every positive integer $T$, there exists a distribution $\cP$ on $\bbP_T$ such that for every $L\in \bbL_T$,
    \[
    \E_{P\sim \cP}[\sr(P;L)] = O(\sqrt T \log T).
    \]
Equivalently, for every positive integer $T$,
\[
\inf\nolimits_{\cP\in \Delta_{\bbP_T}}\sup\nolimits_{L\in \bbL_T}\E_{P\sim \cP}[\sr(P;L)] = O(\sqrt T \log T).
\]
\end{lemma}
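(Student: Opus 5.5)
The plan is to apply a minimax theorem to a finite version of the game and then use \Cref{lm:truthful-reversed} to bound the value of the order-reversed game.

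First discretize so that both strategy spaces are finite. The predictor's actions are restricted to the finite set $B$ from \eqref{eq:Theta}; the truthful predictor only uses $B$ anyway. The adversary's losses are restricted to a finite net of $\Lambda$. By \Cref{lm:vshape}, each $\ell\in\Lambda$ is, up to an additive constant, $\E_{v\sim\varphi}|p-v|$. Since the strategy only ever uses comparisons of $v$ with points of a finite grid, I would replace $\varphi$ by a distribution on a grid $G$ of mesh $1/T$ with rational masses of denominator $T$. This changes every $\ell(p)$ by $O(1/T)$ uniformly, hence changes the swap regret by $O(1)$ for any transcript.

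With finitely many actions per round and $T$ rounds, the pure strategy sets $\bbP_T$ and $\bbL_T$ (restricted to these finite alphabets) are finite. Von Neumann's minimax theorem then gives
\[
\min_{\cP}\max_{L}\E_{P\sim\cP}\,\sr(P;L)=\max_{\cL}\min_{P}\E_{L\sim\cL}\,\sr(P;L).
\]

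It remains to bound the right-hand side. Fix any mixed adversary $\cL$. By Kuhn's theorem (perfect recall), $\cL$ is equivalent to a behavioral strategy: at each history it outputs a distribution $\pi_t$ over losses. Knowing $\cL$, the predictor can compute $\pi_t$ from the transcript, and this is exactly the setting of \Cref{def:order-reversed}. Playing the truthful multi-scale predictor, \Cref{lm:truthful-reversed} with $\delta=1/T$ gives $\E\,\sr=O(\sqrt T\log T)$. This is a deterministic strategy in $\bbP_T$ depending on $\cL$, so the max--min is $O(\sqrt T\log T)$, and hence so is the min--max.

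Finally, transfer back to the unrestricted game. Given an arbitrary adversary $L\in\bbL_T$ over all of $\Lambda$, round each $\ell_t$ to its net element $\tilde\ell_t$ with $\|\ell_t-\tilde\ell_t\|_\infty=O(1/T)$. Run the predictor $\cP$ on the rounded losses; this induces an adversary strategy in the finite game. The swap regrets differ by at most $2T\cdot O(1/T)=O(1)$, since $\sr$ is a supremum of sums of $T$ terms each perturbed by $O(1/T)$. This yields \Cref{lm:non-construct}.

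The main obstacle is the discretization and measurability step. The minimax theorem needs finite (or compact) spaces, while $\Lambda$ is infinite and the predictor's $\sup_\sigma$ ranges over $[0,1]$. One must check that restricting predictions to $B$ loses nothing for the minimax direction used, which holds because we only need an upper bound achieved by a $B$-valued strategy. One must also check that rounding losses is uniformly $O(1/T)$ via the V-shaped representation. The cleanest fix, if the grid argument gets messy, is to invoke a compact minimax theorem (e.g.\ Sion's) on the weak-$*$ compact space of loss distributions instead.
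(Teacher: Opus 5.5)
Your proposal is correct and follows essentially the same route as the paper: discretize the adversary's losses to a finite net (the paper uses an arbitrary $\varepsilon$-net $\Lambda_\varepsilon$, losing $2\varepsilon T$ in the transfer back), restrict predictions to the multi-scale set $B$, apply von Neumann's minimax theorem to the resulting finite strategy spaces, and bound the max--min side by the truthful predictor of \Cref{lm:truthful-reversed} with $\delta = 1/T$. Your Kuhn/behavioral-strategy step and your concrete $O(1/T)$ net spell out details the paper leaves implicit; for the net, round the CDF (or place atoms at the $i/T$-quantiles) rather than rounding each atom mass independently, since independent rounding does not give a uniform $O(1/T)$ error.
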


The minimax theorem requires the strategy spaces of both players to be finite. Towards making the adversary's strategy space finite, we consider a finite $\varepsilon$-net $\Lambda_\varepsilon$ of $\Lambda$ such that for every $\ell\in \Lambda$, there exists $\ell'\in \Lambda_\varepsilon$ satisfying $\|\ell' - \ell\|_\infty \le \varepsilon$. Now for every strategy $L\in \bbL$, we define $L_\varepsilon\in \bbL$ to be the strategy where whenever $L$ outputs a loss $\ell_t\in \Lambda$, the strategy $L_\varepsilon$ outputs the corresponding loss $\ell_t'\in \Lambda_\varepsilon$ with $\|\ell_t' - \ell\|_\infty \le \varepsilon$. We have $|\sr(P;L) - \sr(P;L_\varepsilon)|\le 2\varepsilon T$ for every $P\in \bbP$. Therefore, \Cref{thm:non-construct} is equivalent to the following lemma:
\begin{lemma}
\label[lemma]{lm:non-construct-finite}
    For every $\varepsilon > 0$ and every positive integer $T$,
\[
\inf\nolimits_{\cP\in \Delta_{\bbP_T}}\sup\nolimits_{L\in \bbL_T}\E_{P\sim \cP}[\sr(P;L_\varepsilon)] = O(\sqrt T \log T).
\]
\end{lemma}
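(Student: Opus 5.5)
To prove \Cref{lm:non-construct-finite}, we make both players' strategy spaces finite, apply the minimax theorem to swap the order of quantifiers, and then use \Cref{lm:truthful-reversed} to bound the value of the resulting order-reversed game.

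\textbf{Finite predictor space.} We restrict the predictor to deterministic strategies whose predictions lie in the finite bin set $B$ of \eqref{eq:Theta}. Since the adversary's losses now lie in the finite set $\Lambda_\varepsilon$, every transcript prefix lies in the finite set $B^{t-1}\times\Lambda_\varepsilon^{t-1}$. Hence the restricted predictor strategy set $\bbP_T^B$ is finite. For the adversary, we likewise only need to consider strategies that output losses in $\Lambda_\varepsilon$ as functions of such finite transcripts; call this finite set $\bbL_T^\varepsilon$. The map $L\mapsto L_\varepsilon$ lands in this set, so
\[
\sup_{L\in\bbL_T}\E_{P\sim \cP}[\sr(P;L_\varepsilon)] \le \sup_{L\in\bbL_T^\varepsilon}\E_{P\sim\cP}[\sr(P;L)].
\]
It therefore suffices to bound
\[
\min_{\cP\in\Delta_{\bbP_T^B}}\max_{\cL\in\Delta_{\bbL_T^\varepsilon}}\E_{P\sim\cP,\,L\sim\cL}[\sr(P;L)].
\]
This is a finite zero-sum matrix game, so von Neumann's minimax theorem lets us replace it with
\[
\max_{\cL}\min_{\cP}\E[\sr(P;L)].
\]

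\textbf{Bounding the reversed value.} Fix a mixed adversary $\cL$. By Kuhn's theorem (the game has perfect recall), $\cL$ is equivalent to a behavioral strategy. That is, given any transcript prefix, the conditional law of $\ell_t$ is a distribution $\pi_t$ on $\Lambda_\varepsilon$ that depends only on the past, and the predictor can compute it because it knows $\cL$. The predictor then plays the truthful multi-scale binning rule of \Cref{sec:truthful} against $\pi_t$. This defines a deterministic strategy in $\bbP_T^B$. The induced process is exactly an instance of \Cref{def:order-reversed}, so \Cref{lm:truthful-reversed} with $\delta=1/T$ gives expected swap regret $O(\sqrt T\log T)$. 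The constant is independent of $\varepsilon$ and $\cL$, so the max-min value, and hence the min-max value, is $O(\sqrt T\log T)$ uniformly in $\varepsilon$. This proves \Cref{lm:non-construct-finite}. Combining with the bound $|\sr(P;L)-\sr(P;L_\varepsilon)|\le 2\varepsilon T$ and letting $\varepsilon\to0$ (for instance $\varepsilon=1/T$) then yields \Cref{thm:non-construct}.

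\textbf{Main obstacle.} The delicate step is the measure-theoretic bookkeeping in passing from a mixed adversary to its conditional distributions $\pi_t$. We must justify the following: conditioning on the transcript, which includes the predictor's own deterministic past outputs, yields a valid adaptive sequence as in \Cref{def:order-reversed}, and the martingale arguments behind \Cref{lm:truthful-reversed} apply. Since everything is finite, I would handle this by writing $\pi_t$ explicitly as a ratio of probabilities of transcript prefixes, setting it arbitrarily on prefixes of probability zero. A second point to check is that the truthful rule only ever outputs bins in $B$, so the restriction to $\bbP_T^B$ loses nothing.
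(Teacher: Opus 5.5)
Your proposal is correct and follows essentially the same route as the paper. You restrict predictions to $B$ and losses to $\Lambda_\varepsilon$ to obtain finite strategy sets (the paper's $\bbP_{T,B,\varepsilon}$ and $\bbL_{T,B,\varepsilon}$), apply the minimax theorem, and bound the max-min value with the truthful multi-scale predictor of \Cref{lm:truthful-reversed}; your explicit use of Kuhn's theorem to extract the conditional distributions $\pi_t$ from a mixed adversary spells out a step the paper leaves implicit.
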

We prove a stronger version of \Cref{lm:non-construct-finite} by restricting the prediction strategy $P$ to a subset $\bbP_{T,B}$ of $\bbP_T$. Here we take a finite subset $B$ of $[0,1]$ and define $\bbP_{T,B}$ as the set of all prediction strategies $P\in \bbP_T$ that are restricted to only output predictions $p_t\in B$.

\begin{lemma}
\label[lemma]{lm:non-construct-finite-2}
For every positive integer $T$,
there exists a finite subset $B\subseteq [0,1]$ such that
    for every $\varepsilon > 0$,
    \[
    \inf\nolimits_{\cP\in \Delta_{\bbP_{T,B}}}\sup\nolimits_{L\in \bbL_T}\E_{P\sim \cP}[\sr(P;L_\varepsilon)] = O(\sqrt T \log T).
    \]
\end{lemma}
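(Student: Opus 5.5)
We take $B$ to be the multi-scale bin set of \eqref{eq:Theta} with $\gamma:=\sqrt{(\ln T)(\ln T)/T}$, i.e.\ the parameter of \Cref{sec:truthful} with $\delta=1/T$. If $\gamma>1$, any $B$ works, since the swap regret is trivially at most $T=O(\sqrt T\log T)$. Fix $\varepsilon>0$. The plan is to view the game in which the predictor is restricted to $\bbP_{T,B}$ and the adversary to strategies of the form $L_\varepsilon$ as a finite game, apply von Neumann's minimax theorem, and bound the resulting order-reversed value by \Cref{lm:truthful-reversed}.

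\textbf{Step 1: finiteness.} Only $\Lambda_\varepsilon$-valued losses are ever played, and predictions lie in $B$. Hence transcripts range over the finite set $B^{t-1}\times\Lambda_\varepsilon^{t-1}$, and it suffices to define strategies on these transcripts. The pure strategy sets $\bbP_{T,B}$ and $\{L_\varepsilon: L\in\bbL_T\}$ are then finite, modulo identifying strategies that agree on all reachable transcripts. The payoff $\sr(P;L_\varepsilon)$ is a well-defined real number. The minimax theorem therefore gives
\[
\inf_{\cP}\sup_{L}\E_{P\sim\cP}[\sr(P;L_\varepsilon)]=\sup_{\mathcal L}\inf_{P\in\bbP_{T,B}}\E_{L\sim\mathcal L}[\sr(P;L_\varepsilon)],
\]
where $\mathcal L$ ranges over distributions on the finite adversary strategy set.

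\textbf{Step 2: the reversed game.} Fix a mixed adversary strategy $\mathcal L$. Via Kuhn's theorem (perfect recall), or simply by conditioning, $\mathcal L$ induces a behavioral strategy. At round $t$, given the transcript, this is a conditional distribution $\pi_t$ over $\Lambda_\varepsilon$ from which $\ell_t$ is drawn. The distribution $\pi_t$ depends on past $p$'s and $\ell$'s, but not on the current $p_t$, because $\ell_t$ is chosen without seeing $p_t$. The predictor, knowing $\mathcal L$, can compute $\pi_t$ before choosing $p_t$. This is exactly \Cref{def:order-reversed}. Let the predictor run the truthful multi-scale binning predictor of \Cref{sec:truthful}; it outputs $p_t\in B$, so it lies in $\bbP_{T,B}$. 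This predictor is a deterministic function of the transcript, since the width, the $\alpha_t$, and a chosen $b_t$ are all determined by $\pi_t$. By \Cref{lm:truthful-reversed} with $\delta=1/T$, its expected swap regret against $\mathcal L$ is $O(\sqrt T\log T)$, uniformly in $\mathcal L$ and $\varepsilon$. Taking the infimum over $P$ and then the supremum over $\mathcal L$ bounds the right-hand side, and hence the left-hand side.

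\textbf{Main obstacle.} The delicate point is justifying the minimax step. We must check that the game really is finite, and that the adversary's mixed strategy faithfully becomes a sequence of conditional distributions $\pi_t$ not depending on $p_t$, so that the Azuma-based analysis of \Cref{lm:truthful-reversed} applies. Conditionally on the history, $\ell_t\sim\pi_t$ independently of the predictor's deterministic choice of $p_t$; this gives the martingale structure that \Cref{lm:truthful-reversed} uses.

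A further detail is that $B$ must not depend on $\varepsilon$. It does not: $B$ depends only on $T$ through $\gamma$, and the bound of \Cref{lm:truthful-reversed} is independent of the loss family. Combining this with the $2\varepsilon T$ approximation noted before \Cref{lm:non-construct-finite} then yields \Cref{thm:non-construct}.
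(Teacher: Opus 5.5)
Your proposal is correct and follows essentially the same route as the paper. Both arguments restrict play to transcripts in $B^{t-1}\times\Lambda_\varepsilon^{t-1}$ to obtain a finite game, apply von Neumann's minimax theorem, and answer any mixed adversary with the truthful multi-scale binning predictor of \Cref{lm:truthful-reversed} at $\delta=1/T$. Your fixed, $\varepsilon$-independent choice of $B$, and your check that a mixed adversary induces history-dependent conditional distributions $\pi_t$ that do not depend on $p_t$, spell out details the paper leaves implicit.
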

We are now ready to make the strategy spaces of both players finite.
Given $\varepsilon > 0$ and a finite subset $B\subseteq [0,1]$, we define $\bbP_{T,B,\varepsilon}$ to be the set of function sequences $P = (P_1,\ldots,P_T)$ with $P_t:B^{t - 1}\times \Lambda_\varepsilon^{t - 1}\to B$ for every $t = 1,\ldots,T$. Similarly, we define $\bbL_{T,B,\varepsilon}$ to be the set of function sequences $L = (L_1,\ldots,L_T)$ with $L_t:B^{t - 1}\times \Lambda_\varepsilon^{t - 1}\to \Lambda_\varepsilon$ for every $t = 1,\ldots,T$. \Cref{lm:non-construct-finite-2} is equivalent to the following lemma:

\begin{lemma}
\label[lemma]{lm:non-construct-finite-4}
    For every positive integer $T$, there exists a finite subset $B\subseteq [0,1]$ such that for every $\varepsilon > 0$,
    \[
    \inf\nolimits_{\cP\in \Delta_{\bbP_{T,B,\varepsilon}}}\sup\nolimits_{L\in \bbL_{T,B,\varepsilon}}\E_{P\sim \cP}[\sr(P;L)] = O(\sqrt T \log T).
    \]
\end{lemma}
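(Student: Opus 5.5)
We will take $B$ to be the multi-scale bin set $B=\bigcup_{r\in R}B_r$ from \eqref{eq:Theta}, with $\gamma$ chosen as in \Cref{sec:truthful} for $\delta=1/T$. With this $B$ fixed, both strategy spaces $\bbP_{T,B,\varepsilon}$ and $\bbL_{T,B,\varepsilon}$ are finite, since every $P_t$ and $L_t$ is a function between finite sets. The payoff $\sr(P;L)$ is a real number for each pure pair. So von Neumann's minimax theorem applies to the mixed extensions, giving
\[
\inf_{\cP\in\Delta_{\bbP_{T,B,\varepsilon}}}\sup_{L}\E_{P\sim\cP}[\sr(P;L)]
=\sup_{\cL\in\Delta_{\bbL_{T,B,\varepsilon}}}\min_{P}\E_{L\sim\cL}[\sr(P;L)].
\]
It therefore suffices to show that for every mixed adversary $\cL$ there is a pure predictor strategy $P$ with $\E_{L\sim\cL}[\sr(P;L)]=O(\sqrt T\log T)$.

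Fix $\cL$. By Kuhn's theorem (perfect recall), the mixture over adversary strategies induces, at each history $(p_{1:t-1};\ell_{1:t-1})$ of positive probability, a conditional distribution $\pi_t$ on $\Lambda_\varepsilon$ of the next loss. The next loss is conditionally independent of the predictor's current choice, because $L_t$ does not depend on $p_t$. The predictor $P$ is then defined as the truthful multi-scale predictor of \Cref{sec:truthful}, run against the conditional distributions $\pi_t$. At each history it computes $\pi_t$ from $\cL$ and the realized history, forms $\bar\varphi_t$ via \Cref{lm:vshape}, takes the $\gamma$-width $w_t$ from \Cref{lm:width}, and outputs $b_t\in B_{r_t}\subseteq B$. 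This is a deterministic function of the history into $B$, so $P\in\bbP_{T,B,\varepsilon}$. At zero-probability histories it can be set arbitrarily.

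Under $(P,L)$ with $L\sim\cL$, the transcript has exactly the law of \Cref{def:order-reversed}: in round $t$ the adversary "reveals" $\pi_t$, the predictor answers, and then $\ell_t\sim\pi_t$ is drawn. \Cref{lm:truthful-reversed} with $\delta=1/T$ therefore gives $\sr=O(\sqrt{T\log T\cdot\log T})$ with probability at least $1-1/T$. On the complementary event we use the trivial bound $\sr\le T$, since losses are $1$-Lipschitz on $[0,1]$. Together these give expectation $O(\sqrt T\log T)$. The bound is uniform in $\varepsilon$ because $B$ depends only on $T$.

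The main obstacle is to verify rigorously that the mixed adversary reduces to the order-reversed game, i.e.\ that $\ell_t$ is conditionally distributed as $\pi_t$ given the history and the predictor's choice. The analysis of \Cref{lm:truthful-reversed} also relies on martingale concentration with respect to this filtration: $\varphi_t$ must have conditional mean $\bar\varphi_t$. Both properties follow from the adversary committing to $L$ before play and $L_t$ depending only on past rounds. We will spell this out through the behavioral-strategy representation, computing $\pi_t$ by Bayes' rule over $\cL$ conditioned on past losses.
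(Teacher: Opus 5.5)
Your proposal is correct and follows the paper's route: apply the minimax theorem to the finite game on $\bbP_{T,B,\varepsilon}\times\bbL_{T,B,\varepsilon}$ (reducing to \Cref{lm:non-construct-finite-5}), then answer an arbitrary mixed adversary $\cL$ with the truthful multi-scale predictor and invoke \Cref{lm:truthful-reversed}. Your additions are details the paper leaves implicit: computing $\pi_t$ from $\cL$ by conditioning on the realized history, and bounding the probability-$1/T$ failure event by $\sr\le T$ to get the expectation bound.
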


Note that both $\bbP_{T,B,\varepsilon}$ and $\bbL_{T,B,\varepsilon}$ are finite sets, so we can apply the minimax theorem on them:
\[
\min\nolimits_{\cP\in \Delta_{\bbP_{T,B,\varepsilon}}}\max\nolimits_{L\in \bbL_{T,B,\varepsilon}}\E_{P\sim \cP}[\sr(P;L)] = \max\nolimits_{\cL\in \Delta_{\bbL_{T,B,\varepsilon}}} \min\nolimits_{P\in \bbP_{T,B,\varepsilon}}\E_{L\sim \cL}[\sr(P;L)].
\]
Therefore, \Cref{lm:non-construct-finite-4} is equivalent to the following lemma:
\begin{lemma}
    \label[lemma]{lm:non-construct-finite-5}
    For every positive integer $T$, there exists a finite subset $B\subseteq [0,1]$ such that for every $\varepsilon > 0$,
    \[
    \max\nolimits_{\cL\in \Delta_{\bbL_{T,B,\varepsilon}}} \min\nolimits_{P\in \bbP_{T,B,\varepsilon}}\E_{L\sim \cL}[\sr(P;L)] = O(\sqrt T \log T).
    \]
\end{lemma}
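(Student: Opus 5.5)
We take $B := \bigcup_{r\in R}B_r$ from \eqref{eq:Theta} with $\gamma := \sqrt{(\ln T)(\ln T)/T}$, i.e.\ the multi-scale bin set of \Cref{sec:truthful} with $\delta = 1/T$; it depends only on $T$, as the statement requires. (If $\gamma>1$ the trivial bound $\sr\le T = O(\sqrt T\log T)$ suffices for any $B$.) Fix $\varepsilon>0$ and a mixed adversary strategy $\cL\in\Delta_{\bbL_{T,B,\varepsilon}}$. It suffices to exhibit one deterministic $P\in\bbP_{T,B,\varepsilon}$ with $\E_{L\sim\cL}[\sr(P;L)] = O(\sqrt T\log T)$, since the minimum over $P$ is at most this value.

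\textbf{Construction of $P$.} Given a transcript $(p_{1:t-1};\ell_{1:t-1})\in B^{t-1}\times\Lambda_\varepsilon^{t-1}$, let $\pi_t$ be the conditional distribution of $\ell_t = L_t(p_{1:t-1};\ell_{1:t-1})$ when $L\sim\cL$ is conditioned on being consistent with the observed transcript, i.e.\ $L_s(p_{1:s-1};\ell_{1:s-1})=\ell_s$ for all $s<t$. Here $P$ is fixed, so the $p_s$ are deterministic functions of the transcript and conditioning only touches $L$. For transcripts of probability zero, pick $\pi_t$ arbitrarily. Define $P_t$ to output the truthful multi-scale prediction $b_t\in B_{r_t}\subseteq B$ computed from $\pi_t$ exactly as in \Cref{sec:truthful}: take $\bar\varphi_t$, its $\gamma$-width $w_t$ and $\alpha_t,\beta_t$ from \Cref{lm:width}, and choose $r_t$ and $b_t$ accordingly. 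This is a well-defined deterministic function of the finite transcript, so $P\in\bbP_{T,B,\varepsilon}$.

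\textbf{Reduction to \Cref{lm:truthful-reversed}.} Under $L\sim\cL$ played against $P$, the joint law of $(p_{1:T},\ell_{1:T})$ matches an instance of \Cref{def:order-reversed}. At each round, the adversary reveals $\pi_t$ (a function of the history), the predictor answers via the truthful rule, and $\ell_t$ is drawn from $\pi_t$ given the history. This follows by a routine induction using Bayes' rule over the finite set $\bbL_{T,B,\varepsilon}$ (Kuhn's theorem for behavioral strategies). The losses lie in $\Lambda_\varepsilon\subseteq\Lambda$, so they are $1$-Lipschitz and convex, as \Cref{def:order-reversed} requires. The proof of \Cref{lm:truthful-reversed} uses only that $\ell_t\sim\pi_t$ conditionally on the past, which is exactly what the Azuma step needs. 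So with probability $\ge 1-1/T$ the swap regret is $O(\sqrt{T\log T\cdot\log T})$, and otherwise it is at most $T$. This gives $\E_{L\sim\cL}[\sr(P;L)] = O(\sqrt T\log T)+1$.

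\textbf{Main obstacle.} The delicate step is the measure-theoretic identification of the mixed-strategy game with \Cref{def:order-reversed}. Two points need care. First, the conditional $\pi_t$ must depend only on the observed transcript, which holds because $P$ is deterministic. Second, the martingale argument must be valid for this adaptively generated sequence of distributions. Because every set involved is finite, both points reduce to elementary conditional-probability computations, and uniformity in $\varepsilon$ holds because neither $B$ nor the bound depends on $\varepsilon$.
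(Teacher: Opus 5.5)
Your proposal is correct and matches the paper's argument: you fix the multi-scale bin set $B$ from \Cref{sec:truthful} with $\delta = 1/T$, and against any mixed adversary $\cL$ you play the truthful predictor driven by the conditional law $\pi_t$ of the next loss given the transcript, so that \Cref{lm:truthful-reversed} gives the bound. The only difference is that you spell out the behavioral-strategy (Bayes-rule) identification of the game under $\cL$ with an instance of \Cref{def:order-reversed}, which the paper's one-line proof leaves implicit.
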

\begin{proof}[Proof of \Cref{thm:non-construct} using \Cref{lm:truthful-reversed}]
By our discussion above, it suffices to prove \Cref{lm:non-construct-finite-5}. Given any mixed strategy $\cL\in \Delta_{\bbL_{T,B,\varepsilon}}$ of the adversary, \Cref{lm:truthful-reversed} gives a prediction strategy $P\in \bbP_{T,B,\varepsilon}$ that achieves $\E_{L\sim \cL}[\sr(P;L)] = O(\sqrt T\log T)$, completing the proof.
\end{proof}

\end{document}